\definecolor{Gray}{gray}{0.9}
\title{A Differentially Private Probabilistic Framework for Modeling the Variability Across Federated Datasets of Heterogeneous Multi-View Observations}
\author{\name Irene Balelli \email irene.balelli@inria.fr \\  
	\addr Universit\'e C\^ote d'Azur, Inria Sophia Antipolis-M\'editeran\'ee, Epione Research Project, France
	\AND
	\name Santiago Silva \email santiago-smith.silva-rincon@inria.fr \\
	\addr Universit\'e C\^ote d'Azur, Inria Sophia Antipolis-M\'editeran\'ee, Epione Research Project, France
	\AND
	\name Marco Lorenzi \email marco.lorenzi@inria.fr \\
	\addr Universit\'e C\^ote d'Azur, Inria Sophia Antipolis-M\'editeran\'ee, Epione Research Project, France
}
\begin{document}

\maketitle
\begin{abstract}
We propose 
a novel federated learning 
paradigm to model data variability among heterogeneous clients in multi-centric studies. Our method is expressed through a hierarchical Bayesian latent variable model, where client-specific parameters are assumed to be realization from a global distribution at the master level, which is in turn estimated to account for data bias and variability across clients. We show that our framework can be effectively optimized through expectation maximization (EM) over latent master's distribution and clients' parameters.
We also introduce formal differential privacy (DP) guarantees compatibly with our EM optimization scheme. We tested our method on the analysis of multi-modal medical imaging data and clinical scores from distributed clinical datasets of patients affected by Alzheimer's disease. 
We demonstrate that our method is robust when data is distributed either in iid and non-iid manners, even when local parameters perturbation is included to provide DP guarantees. 
Moreover, the variability of data, views and centers can be quantified in an interpretable manner, while guaranteeing high-quality data reconstruction as compared to state-of-the-art autoencoding models and federated learning schemes.
%
The code is available at~\url{https://gitlab.inria.fr/epione/federated-multi-views-ppca}.
\end{abstract}

\begin{keywords}
  Federated Learning, Hierarchical Generative Model, Heterogeneity, Differential Privacy
\end{keywords}
\section{Introduction}\label{Sec1}
The analysis of medical imaging datasets 
requires the joint modeling of 
multiple \emph{views}  
(or modalities), such as clinical scores and multi-modal medical imaging data. For example, in dataset from neurological studies, views are generated through different medical imaging data acquisition processes, as for instance Magnetic Resonance Imaging (MRI) or Positron Emission Tomography (PET). 
Each view provides specific information about the pathology, and the joint analysis of all views is necessary to improve diagnosis, for the discovery of pathological relationships or for predicting  disease evolution. Nevertheless, the integration of \emph{multi-views} data, accounting for their mutual interactions and their joint variability, 
presents a number of challenges.

When dealing with high dimensional and noisy data it is crucial to be able to extract an informative lower dimensional representation to disentangle the relationships among observations, accounting for the intrinsic heterogeneity of the original complex data structure. From a statistical perspective, this implies the estimation of a model of the joint variability across views, or equivalently the development of a joint \emph{generative model}, assuming the existence of a common latent representation generating all views.

Several data assimilation methods based on dimensionality reduction have been developed~\citep{cunningham2015linear}, and successfully applied to a variety of domains. The main goal of these methods is to identify a suitable lower dimensional latent space, where meaningful statistical properties of the original dataset are identified after projection. The most basic among such methods is Principal Component Analysis (PCA)~\citep{jolliffe1986principal}, where data are projected over the axes of maximal variability. More flexible approaches based on non-linear representation of the data variability are Auto-Encoders~\citep{kramer1991nonlinear,goodfellow2016deep}, 
enabling  to  learn  a  low-dimensional representation  minimizing  the  reconstruction error. In the medical imaging community, non-linear counterparts of PCA have been also proposed by extending the notion of principal components and variability to the Riemannian setting \citep{sommer2010manifold,banerjee2017robust}.

In some cases, Bayesian counterparts of the original dimensionality reduction methods have been developed, such as 
Probabilistic Principal Component Analysis  (PPCA)~\citep{tipping1999probabilistic}, based on factor analysis,
or, more recently, Variational Auto-Encoders  (VAEs)~\citep{kingma2019introduction}, and Bayesian principal geodesic analysis~\citep{zhang2013probabilistic,hromatka2015hierarchical,fletcher2016probabilistic}. 
In particular, VAEs are machine learning algorithms 
based on a generative function which allows probabilistic data reconstruction from the latent space. Encoder and decoder can be flexibly parametrized by neural networks (NNs), and efficiently optimized through Stochastic Gradient Descent (SGD). The added values of Bayesian methods 
is to provide
a tool for sampling new observations from the estimated data distribution, and 
quantify the uncertainty of data and parameters. In addition, Bayesian model selection criteria, such as the Watanabe-Akaike Information Criteria (WAIC)~\citep{gelman2014understanding}, 
allow to perform automatic model selection. 

Multi-centric biomedical studies offer a great opportunity to 
significantly 
increase the quantity and quality of available data, hence to improve the statistical reliability of their analysis.
Nevertheless, in this context, three main data-related challenges should be considered. 1) \emph{Statistical heterogeneity} of \emph{local datasets} (\emph{i.e.} center-specific datasets): observations may be non-identically distributed across centers with respect to some characteristics affecting the output (\emph{e.g.} diagnosis). Additional variability in local datasets can also come from data collection and acquisition bias~\citep{kalter2019development}. 2) \emph{Missing views}: not all views are usually available for each center, due for example to heterogeneous data acquisition and processing pipelines. 3) \emph{Privacy} concerns: privacy-preserving laws are currently enforced to ensure protection of personal data (\emph{e.g.} the European General Data Protection Regulation - GDPR\footnote{\url{https://gdpr-info.eu/}}), often preventing the centralized analysis of data collected in multiple centers~\citep{iyengar2018healthcare,chassang2017impact}. 
These limitations impose the need for extending currently available data assimilation methods to handle decentralized heterogeneous data and missing views in local datasets.

Federated learning (FL) is an emerging analysis paradigm specifically developed for the decentralized training of machine learning models.
The standard aggregation method in FL is Federated Averaging (FedAvg)~\citep{mcmahan2017communication}, which combines locally trained models via weighted averaging. This aggregation scheme is generally sensitive to statistical heterogeneity, which naturally arises in federated datasets~\citep{li2020federated}, 
for example when dealing with multi-view data, or when data are not uniformly represented across data centers (e.g. non-iid distributed). In this case a faithful representation of the variability across centers is not guaranteed.

In order to guarantee data governance, FL methods are conceived to avoid sensitive data transfer among centers: raw data are processed within each center, and only local parameters are shared with the master.  
Nevertheless, no formal privacy guarantees are provided on the shared statistics, which may still reveal sensitive information about individual data points used to train the model.  Differential privacy (DP) is an established framework to provide theoretical guarantees about the anonymity of the shared statistics with respect to the training data points. Recent works~\citep{abadi2016deep,geyer2017differentially,triastcyn2019federated} show the importance of combining FL and DP to prevent potential information leakage form the shared parameters, while providing theoretical privacy guarantees for both clients and server.


\begin{figure}[ht]
\centering
\includegraphics[scale=0.75]{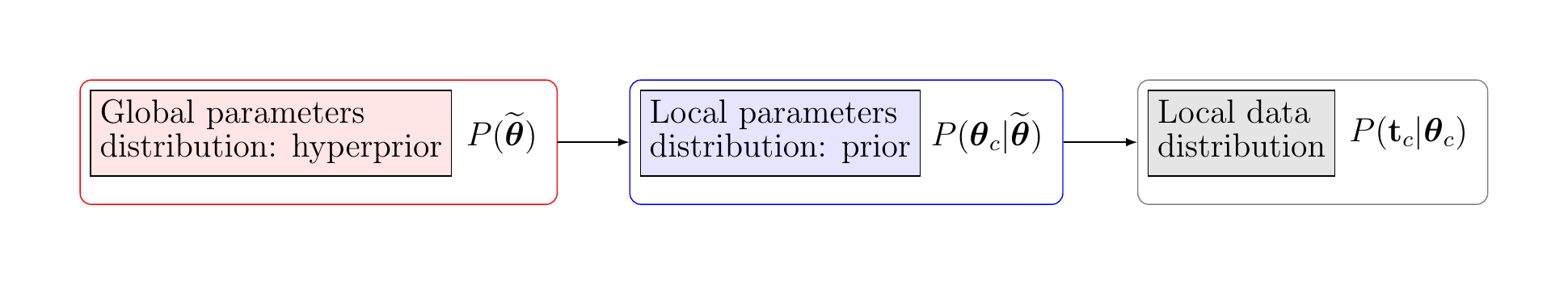}
\caption{Hierarchical structure of Fed-mv-PPCA. Global parameters $\widetilde{\boldsymbol{\theta}}$  characterize the distribution of the local  $\boldsymbol{\theta}_{c}$, which parametrize the local data distribution in each center.
}\label{Hierarc_model}
\end{figure}

We present here Federated multi-view PPCA (Fed-mv-PPCA), a novel FL framework for data assimilation of heterogeneous multi-view datasets. Our framework is designed to account for the heterogeneity of federated datasets through a fully Bayesian formulation. Fed-mv-PPCA is based on a hierarchical dependency of the model's parameters to handle different sources of variability in the federated dataset (Figure \ref{Hierarc_model}). The method is based on a linear generative model, assuming Gaussian latent variables and noise, and allows to account for missing views and observations across datasets. 
In practice, we assume that there exists an ideal global distribution of each parameter, from which local parameters are generated to account for the local data distribution for each center.  We show, in addition, that the privacy of the shared parameters of Fed-mv-PPCA can be explicitly quantified and guaranteed by means of DP. The code developed in Python is publicly available at \url{https://gitlab.inria.fr/epione/federated-multi-views-ppca}.

The paper is organized as follows: in Section \ref{Sec2} we provide a brief overview of the state-of-the-art and highlight the advancements introduced with Fed-mv-PPCA. In Section \ref{Sec3} we describe Fed-mv-PPCA, while its extension to improve privacy preservation through DP is provided in Section \ref{sec:diff_privacy}. In Section \ref{Sec4} we show results with applications to synthetic data and to data from the Alzheimer's Disease Neuroimaging Initiative dataset (ADNI). 
Section \ref{Sec5} concludes the paper with a brief discussion.

\section{Related Works}\label{Sec2}


Several methods for dimensionality reduction based on generative models have been developed in the past years, starting from the seminal work of PPCA  by \cite{tipping1999probabilistic}, to Bayesian Canonical Correlation Analysis (CCA)~\citep{klami2013bayesian}, which \cite{matsuura2018generalized} extended to include multiple views and missing modalities, up to more complex methods based on multi-variate association models~\citep{shen2019brain}, developed, for example, to integrate multi-modal brain imaging data and high-throughput genomics data. Other works with interesting applications to medical imaging data are based on Riemannian approaches to better deal with non linearity \citep{sommer2010manifold,banerjee2017robust}, and have been extended to a latent variable formulation (Probabilistic Principal Geodesic Analysis - PPGA - by \cite{zhang2013probabilistic}).

More recent methods for the probabilistic analysis of multi-views datasests include the multi channel Variational Autoencoder (mc-VAE) by  \cite{antelmi2019sparse} and Multi-Omics Factor Analysis (MOFA) by  \cite{Argelaguet2018}. MOFA generalizes PPCA for the analysis of multi-omics data types, supporting different noise models to adapt to continuous, binary and count data, while mc-VAE  extends the classic VAE~\citep{kingma2014stochastic} to jointly account for multi-views data. Additionally, mc-VAE can handle sparse datasets: data reconstruction in testing can be inferred from available views, if some are missing.

Despite the possibility offered by the above methods for performing data assimilation and integrating multiple views, these approaches have not been conceived to handle federated datasets.

Statistical heterogeneity is a key challenge in FL and, more generally, in multi-centric studies~\citep{li2020federated}. To tackle this problem, \cite{li2018federated} proposed the FedProx algorithm, which improves FedAvg by allowing for partial local work (\emph{i.e.} adapting the number of local epochs) and by introducing a proximal term to the local objective function to avoid divergence due to data heterogeneity.
Other methods have been developed under the Bayesian non-parametric formalism, 
such as probabilistic neural matching~\citep{yurochkin2019bayesian}, where the local parameters of NNs are federated depending on neurons similarities. 

Since the development of FedAvg,  researchers have been focusing in developing FL frameworks robust to the statistical heterogeneity across clients \citep{sattler2019robust,liang2020think}. Most of these frameworks are however formulated for training schemes based on stochastic gradient descent, with principal applications to NNs models. Nevertheless, beyond applications taylored around NNs, we still lack of a consistent and privacy-compliant Bayesian framework for the estimation of local and global data variability, as part of a global optimization model, while accounting for data heterogeneity.
In particular, 
 FL alone does not provide clear theoretical guarantees for privacy preservation, leaving the door open to potential data leakage from malicious clients or the central server, such as through model inversion~\citep{fredrikson2015model}, and researchers are currently focusing to adapt FL schemes to account for DP mechanisms
 ~\citep{mcmahan2017learning}.

All these considerations ultimately motivate for the development of Fed-mv-PPCA and its differential private extension. The main contributions of the work presented in this paper are the following:
\begin{itemize}
    \item we theoretically develop a novel Bayesian hierarchical framework, Fed-mv-PPCA, for data assimilation from heterogeneous multi-views private federated datasets;
    \item we investigate the improvement our framework's security against data leakage by coupling it with differential privacy, and propose DP-Fed-mv-PPCA; 
    \item We apply both models to synthetic data and real multi-modal imaging data and clinical scores form the Alzheimer's Disease Neuroimaging Initiative, demonstrating the robustness of our framework against non-iid data distribution across centers and missing modalities.
\end{itemize}

\section{Methods}\label{Sec3}

\subsection{Federated multi-views PPCA}

\subsubsection{Problem setup}

We consider $C$ independent centers. Each center $c\in\{1,\dots,C\}$ owns a private local dataset $T_{c} = \left\{\mathbf{t}_{c,n}\right\}_{n=1,\dots,N_c}$, where we denote by  $\mathbf{t}_{c,n}$ the data row for subject $n$ in center $c$, with $n=1,\dots,N_c$. We assume that a total of $K$ distinct views have been measured across all centers, and we allow missing views in some local dataset (\emph{i.e.} some local dataset could be incomplete, including only measurements for $K_{c}< K$ views). For every $k\in\{1,\dots,K\}$, the dimension of the $k^{\textrm{th}}$-view (\emph{i.e.} the number of features defining the $k^{\textrm{th}}$-view) is $d_k$, and we define $d:=\sum_{k=1}^K d_k$. We denote by $\mathbf{t}_{c,n}^{(k)}$ the raw data of subject $n$ in center $c$ corresponding to the $k^{\textrm{th}}$-view, hence $\mathbf{t}_{c,n} = \left(\mathbf{t}_{c,n}^{(1)},\dots,\mathbf{t}_{c,n}^{(K)}\right)$.

\subsubsection{Modeling assumptions}\label{sec:meth}

The main assumption at the basis of Fed-mv-PPCA is the existence of a hierarchical structure underlying the data distribution. In particular, we assume that there exist global parameters $\widetilde{\boldsymbol{\theta}}$, following a distribution $P(\widetilde{\boldsymbol{\theta}})$, able to describe the global data variability, \emph{i.e.} the ensemble of local datasets. For each center, local parameters $\boldsymbol{\theta}_{c}$ are generated from $P(\boldsymbol{\theta}_{c}|\widetilde{\boldsymbol{\theta}})$, to account for the specific variability of the local dataset. Finally, local data $\mathbf{t}_{c}$ are obtained from their local distribution $P(\mathbf{t}_{c}|\boldsymbol{\theta}_{c})$. Given the federated datasets, Fed-mv-PPCA provides a consistent Bayesian framework to solve the inverse problem and estimate the model's parameters across the entire hierarchy. 

We assume that in each center $c$, the local data  of subject $n$ corresponding to the $k^{\textrm{th}}$-view, $\mathbf{t}_{c,n}^{(k)}$, follows the generative model:
\begin{equation}\label{eq_tcnkg}
    \mathbf{t}_{c,n}^{(k)}=W_{c}^{(k)}\mathbf{x}_{c,n}+\boldsymbol{\mu}_{c}^{(k)}+\boldsymbol{\varepsilon}_{c}^{(k)},
\end{equation}
where 
$\mathbf{x}_{c,n}\sim\mathcal{N}(0,\mathbb{I}_q)$ is a $q$-dimensional latent variable, and $q<\min_k(d_k)$ is the dimension of the latent-space.
$W_{c}^{(k)}\in\mathbb{R}^{d_k\times q}$ provides the linear mapping between latent space and observations for the $k^{\textrm{th}}$-view,
$\boldsymbol{\mu}_{c}^{(k)}\in\mathbb{R}^{d_k}$ is the offset of the data corresponding to view $k$, and 
$\boldsymbol{\varepsilon}_{c}^{(k)}\sim\mathcal{N}\left(0,{\sigma_{c}^{(k)}}^2\mathbb{I}_{d_k}\right)$ is the Gaussian noise for the $k^{\textrm{th}}$-view. 
This formulation induces a Gaussian distribution over $\mathbf{t}_{c,n}^{(k)}$, implying:
\begin{equation}\label{dist_tcnkg}
\mathbf{t}_{c,n}^{(k)}\sim\mathcal{N}(\boldsymbol{\mu}_{c}^{(k)},C_{c}^{(k)}),
\end{equation}
where $C_{c}^{(k)}=W_{c}^{(k)}{W_{c}^{(k)}}^T+{\sigma_{c}^{(k)}}^2\mathbb{I}_{d_k}\in\mathbb{R}^{d_k\times d_k}$. 
Finally, a compact formulation for $\mathbf{t}_{c,n}$ (\emph{i.e.} considering all views concatenated) can be derived from Equation \eqref{eq_tcnkg}:
\begin{equation}\label{eq_tcng}
    \mathbf{t}_{c,n}=W_{c}\mathbf{x}_{c,n}+\boldsymbol{\mu}_{c}+\Psi_{c},
\end{equation}
where $W_{c}, \boldsymbol{\mu}_{c}$ are obtained by concatenating all $W_{c}^{(k)}, \boldsymbol{\mu}_{c}^{(k)}$, and $\Psi_{c}$ is a block diagonal matrix, where the $k^{\textrm{th}}$-block is given by $\boldsymbol{\varepsilon}_{c}^{(k)}$.  
The local parameters describing the center-specific dataset thus are $\boldsymbol{\theta}_{c}:=\left\{\boldsymbol{\mu}_{c}^{(k)},W_{c}^{(k)},{\sigma_{c}^{(k)}}^2\right\}_{k}$. According to our hierarchical formulation, we assume that each local parameter in $\boldsymbol{\theta}_{c}$ is a realization of a common global prior distribution described by $\widetilde{\boldsymbol{\theta}}:=\left\{\widetilde{\boldsymbol{\mu}}^{(k)},\sigma_{\widetilde{\boldsymbol{\mu}}^{(k)}},\widetilde{W}^{(k)},\sigma_{\widetilde{W}^{(k)}},\widetilde{\alpha}^{(k)},\widetilde{\beta}^{(k)}\right\}_{k}$. In particular we assume that $\boldsymbol{\mu}_{c}^{(k)}$ and $W_{c}^{(k)}$ are normally distributed, while the variance of the Gaussian error, ${\sigma_{c}^{(k)}}^2$, follows an inverse-gamma distribution. Formally:
\begin{eqnarray}
\boldsymbol{\mu}_{c}^{(k)}|\widetilde{\boldsymbol{\mu}}^{(k)},\sigma_{\widetilde{\boldsymbol{\mu}}^{(k)}} & \sim & \mathcal{N}\left(\widetilde{\boldsymbol{\mu}}^{(k)},\sigma_{\widetilde{\boldsymbol{\mu}}^{(k)}}^2\mathbb{I}_{d_k}\right),\label{eq:priors1}\\
W_{c}^{(k)}|\widetilde{W}^{(k)},\sigma_{\widetilde{W}^{(k)}} & \sim & \mathcal{MN}_{k,q}\left(\widetilde{W}^{(k)},\mathbb{I}_{d_k},\sigma_{\widetilde{W}^{(k)}}^2\mathbb{I}_q\right), \label{eq:priors2}\\
{\sigma_{c}^{(k)}}^2|\widetilde{\alpha}^{(k)},\widetilde{\beta}^{(k)} & \sim &  \textrm{Inverse-Gamma}(\widetilde{\alpha}^{(k)},\widetilde{\beta}^{(k)}), \label{eq:priors4}
\end{eqnarray}
where $\mathcal{MN}_{k,q}$ denotes the matrix normal distribution of dimension $d_k\times q$.

\subsubsection{Proposed framework}\label{ssec:framework}

The assumptions made in Section \ref{sec:meth} allow to naturally define an optimization scheme based on Expectation Maximization (EM) locally, and on Maximum Likelihood estimation (ML) at the master level (Algorithm \ref{algo}). Figure~\ref{Graph_model} shows the graphical model of Fed-mv-PPCA. 

\begin{figure}[ht]
\centering
\includegraphics[scale=0.6]{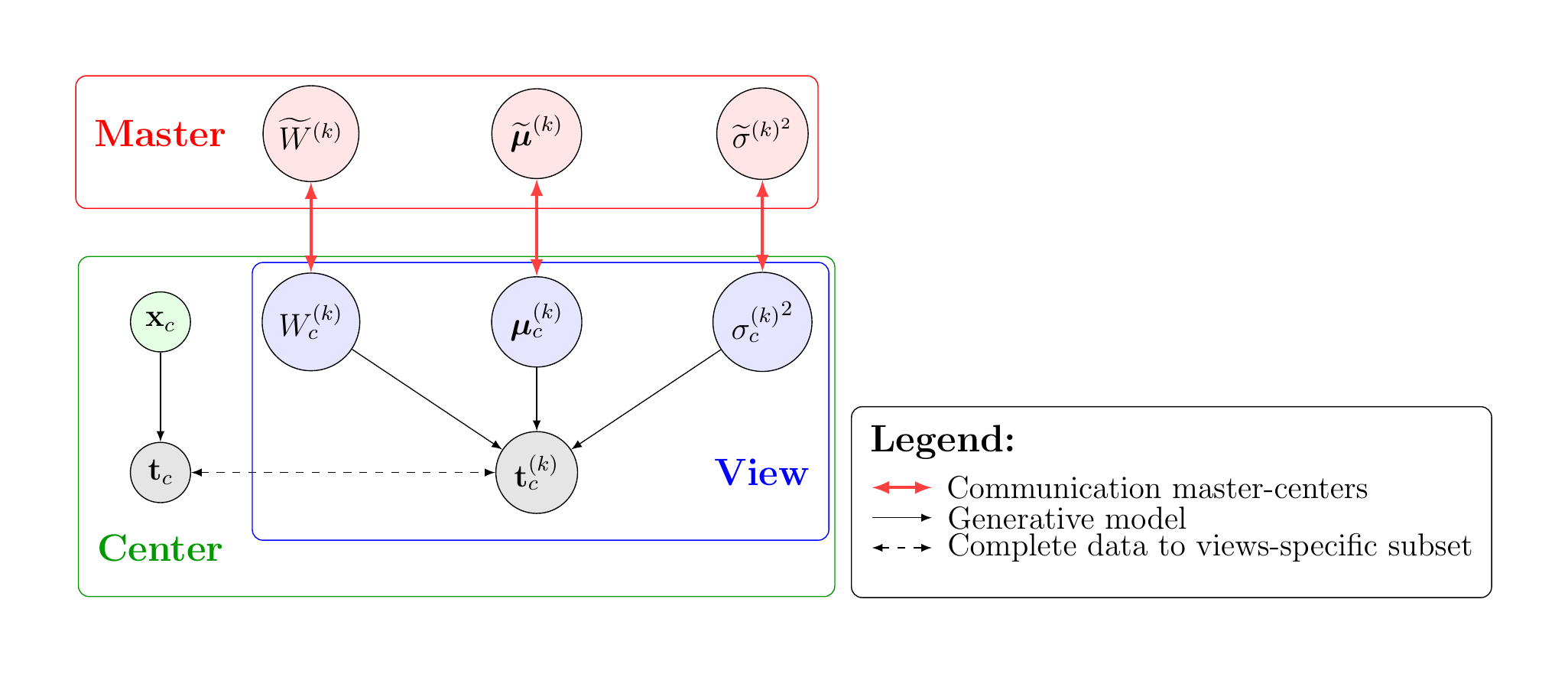}
\caption{Graphical model of Fed-mv-PPCA. Thick double-sided red arrows relate nodes which are shared between center and master, while plain black arrows define the relations between the local dataset and the generative model parameters. Grey filled circles correspond to raw data: the dashed double-sided arrow highlights the complexity of the dataset, composed by multiple views. }\label{Graph_model}
\end{figure}

\begin{algorithm}[H]
\SetKwInOut{Input}{Input}\SetKwInOut{Output}{Output}
\SetAlgoLined
\Input{Rounds $R$; Iterations $I$; Latent space dimension $q$}
\Output{Global parameters $\widetilde{\boldsymbol{\theta}}$}
\BlankLine
 \For{$r=1,\dots,R$}{
 \For{$c=1,\dots,C$ \textbf{in parallel}}{
  Each center $c$ initializes $\boldsymbol{\theta}_{c}$ using $P(\boldsymbol{\theta}_{c}|\widetilde{\boldsymbol{\theta}})$\;
  $I$ iterations of MAP estimation of $\boldsymbol{\theta}_{c}$ using $\widetilde{\boldsymbol{\theta}}$ as prior\;
  }
  Each center $c$ returns $\boldsymbol{\theta}_{c}$ to the master\;
  The master collects $\boldsymbol{\theta}_{c}$, $c=1,\dots,C$ and estimates $\widetilde{\boldsymbol{\theta}}$ through ML\;
  The master sends $\widetilde{\boldsymbol{\theta}}$ to all centers
 }
 \caption{Fed-mv-PPCA algorithm}\label{algo}
\end{algorithm}

With reference to Algorithm \ref{algo}, the optimization of Fed-mv-PPCA is as follows: 

\paragraph{Optimization.}
The master collects the local parameters $\boldsymbol{\theta}_{c}$ for $c\in\{1,\dots,C\}$ and estimates the ML updated global parameters characterizing the prior distributions of Equations \eqref{eq:priors1} to \eqref{eq:priors4}.
Updated global parameters $\widetilde{\boldsymbol{\theta}}$ are returned to each center, and serve as priors to update the MAP estimation of the local parameters $\boldsymbol{\theta}_{c}$, through the M step on the functional $\mathbf{E}_{p(\mathbf{x}_{c,n}|\mathbf{t}_{c,n})}\ln{\left(p(\mathbf t_{c,n},\mathbf{x}_{c,n}|\boldsymbol{\theta}_{c})p(\boldsymbol{\theta}_{c}|\widetilde{\boldsymbol{\theta}})\right)}$, where:

\begin{eqnarray*}\mathbf{x}_{c,n}|\mathbf{t}_{c,n}\sim\mathcal{N}\left(\Sigma_{c}^{-1}{W_{c}}^T\Psi_{c}^{-1}(\mathbf{t}_{c,n}-\boldsymbol{\mu}_{c}),\Sigma_{c}^{-1}\right), \Sigma_{c}:=(\mathbb{I}_q+W_{c}^T\Psi_{c}^{-1}W_{c})
    \end{eqnarray*} 
and 
\begin{eqnarray*}
\langle \ln{\left(p(\mathbf t_{c,n},\mathbf{x}_{c,n}|\boldsymbol\theta_c)\right)}\rangle & = & -\sum_{n=1}^{N_{c}}\left\{\sum_{k=1}^K\left[\frac{d_k}{2}\ln{\left({\sigma_{c}^{(k)}}^2\right)}+\frac{1}{2{\sigma_{c}^{(k)}}^2}    \|\mathbf{t}_{c,n}^{(k)}-\boldsymbol{\mu}_{c}^{(k)}\|^2+\right.\right.  \\
& & \frac{1}{2{\sigma_{c}^{(k)}}^2}tr\left({W_{c}^{(k)}}^T{W_{c}^{(k)}}\langle \mathbf{x}_{c,n}\mathbf{x}_{c,n}^T\rangle\right)   \\
& & \left.-\frac{1}{{\sigma_{c}^{(k)}}^2}\langle \mathbf{x}_{c,n}\rangle^T{W_{c}^{(k)}}^T\left(\mathbf{t}_{c,i}^{(k),g}-\boldsymbol{\mu}_{c}^{(k)}\right)\right] \left.+\frac{1}{2}tr\left(\langle \mathbf{x}_{c,n}\mathbf{x}_{c,n}^T\rangle\right)\right\},
\end{eqnarray*}

\paragraph{Initialization at round r=1.}
The latent-space dimension $q$, the number of local iterations $I$ and the number of communication rounds $R$ (\emph{i.e.} number of complete cycles centers-master) are user-defined parameters. 
For the sake of simplicity, we set here the same number of local iterations for every center. Note that this constraint can be easily adapted to take into account systems heterogeneity among centers, as well as the size of each local dataset. 
At the first round, local parameters initialization, hence optimization, can be performed in two distinct ways: 1) each center  can initialize randomly every local parameter, then perform EM through $I$ iterations, maximizing the functional $\langle \ln{\left(p(\mathbf t_{c,n},\mathbf{x}_{c,n}|\boldsymbol\theta_c)\right)}\rangle$; 2) the master can provide priors for at least some parameters, which will be optimized using MAP estimation as described above. In case of a random initialization of local parameters, the number of EM iterations for the first round can be increased: this can be seen as an exploratory phase.

The reader can refer to Appendix A for further details on the theoretical formulation of Fed-mv-PPCA and the corresponding optimization scheme.

\subsection{Fed-mv-PPCA with Differential Privacy}\label{sec:diff_privacy}

Despite the Bayesian federated learning scheme deployed prevents data transfer, it does not provide theoretical privacy guarantees on the shared statistics. Differential privacy (DP)~\citep{dwork2014algorithmic,abadi2016deep} is a standard framework for privacy-preserving computations allowing to quantify a privacy protection budget attached to a given operation, and to sanitize model parameters through output perturbation machanisms based on the addition of a random noise. The noise strength has to be tuned to ensure a good balance between privacy and utility of the outputs.

In Section \ref{sec:DP1} we recall the standard definition of differential privacy and established results on classical random perturbation mechanisms, as well as the composition theorem \citep{dwork2014algorithmic}. A differentially private version of Fed-mv-PPCA (DP-Fed-mv-PPCA) is subsequently derived in Section \ref{sec:DP2}.

\subsubsection{Differential privacy: background 
}\label{sec:DP1}

We denote by $D, D'$ two datasets: $D$ and $D'$ are said to be neighboring or adjacent datasets if they only differ by a datapoint $\mathbf{t}'$, $D=D'\cup\{\mathbf{t}'\}$. In this case we write $\|D-D'\|=1$, where $\|\cdot\|$ denotes the cardinality of a given set.

\begin{definition}\label{def1}
A randomized algorithm $\mathcal{M}:\mathcal{D}\to\mathcal{R}$ with domain $\mathcal{D}$ and range $\mathcal{R}$ is $(\varepsilon,\delta)$-differentially private if for any $D,D'\in\mathcal{D}$ s.t. $\|D-D'\|=1$ and for any  $\mathcal{S}\in\mathcal{R}$:
\begin{equation*}
    \mathbb{P}\left[\mathcal{M}(D)\in\mathcal{S}\right]\leq e^{\varepsilon}\left[\mathcal{M}(D')\in\mathcal{S}\right]+\delta
\end{equation*}
\end{definition}
When $\delta=0$, we simply say that the algorithm $\mathcal{M}$ is $\varepsilon$-differentially private.

A common mechanism to approximate a deterministic function or a query $f:\mathcal{D}\to\mathbb{R}^d$ with differential privacy is the addition of a random noise calibrated on the sensitivity of $f$.

\begin{definition}
The $l_p$-sensitivity of a function $f:\mathcal{D}\to\mathbb{R}^d$ is defined as:
\begin{equation*}
    \Delta_p f = \max_{\|D-D'\|=1}\|f(D)-f(D')\|_p
\end{equation*}
\end{definition}

Classical mechanisms used for perturbation are the \emph{Laplace mechanism} and the \emph{Gaussian mechanism}. A Laplace (resp. Gaussian) mechanism is simply obtained by computing $f$, hence perturbing it with noise added from a Laplace (resp. Gaussian) distribution centered in the origin and with variance depending on the sensitivity of $f$:
\begin{equation*}
    \mathcal{M}(D):=f(D)+Noise,
\end{equation*}
where $Noise\sim\textrm{Laplace}\left(0,\textrm{std}_L(\Delta_p f)\right)$ (resp. $Noise\sim\mathcal{N}\left(0,\textrm{var}_G(\Delta_p f)\right)$).

Hereafter we recall the condition of a Laplace (resp. Gaussian) mechanism to preserve $(\varepsilon,\delta)$-DP.

\begin{theorem}\label{thm1}
Given any function $f:\mathcal{D}\to\mathbb{R}^d$ and $\varepsilon>0$, the Laplace mechanism defined as
\begin{equation*}
    \mathcal{M}(D):=f(D)+(L_1,\dots,L_d),
\end{equation*}
where $L_i$ are iid drawn from $\textrm{Laplace}(0,\Delta_1 f/\varepsilon)$, preserves $\varepsilon$-DP.
\end{theorem}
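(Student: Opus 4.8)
The plan is to establish the classical Laplace mechanism guarantee by directly bounding the ratio of output densities on neighboring datasets. First I would fix two adjacent datasets $D, D'$ with $\|D-D'\|=1$ and write down the probability density of the mechanism's output. Since $\mathcal{M}(D) = f(D) + (L_1,\dots,L_d)$ with each $L_i$ drawn independently from $\textrm{Laplace}(0, \Delta_1 f/\varepsilon)$, the density of $\mathcal{M}(D)$ evaluated at a point $\mathbf{z}\in\mathbb{R}^d$ is the product of $d$ univariate Laplace densities centered at the coordinates of $f(D)$. Concretely, writing $b := \Delta_1 f/\varepsilon$ for the scale, the density is proportional to $\exp\left(-\|\mathbf{z}-f(D)\|_1/b\right)$.

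The key step is to form the pointwise ratio of the two densities and bound it. For any $\mathbf{z}$,
\begin{equation*}
\frac{p_{\mathcal{M}(D)}(\mathbf{z})}{p_{\mathcal{M}(D')}(\mathbf{z})} = \exp\left(\frac{\|\mathbf{z}-f(D')\|_1 - \|\mathbf{z}-f(D)\|_1}{b}\right).
\end{equation*}
By the reverse triangle inequality, the numerator of the exponent is at most $\|f(D)-f(D')\|_1$, which by definition of the $l_1$-sensitivity is bounded by $\Delta_1 f$. Substituting $b = \Delta_1 f/\varepsilon$ gives the ratio $\leq \exp(\varepsilon)$, holding uniformly in $\mathbf{z}$.

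To conclude, I would lift this pointwise density bound to the set-level statement required by Definition \ref{def1}. For any measurable $\mathcal{S}\in\mathcal{R}$, integrating the inequality $p_{\mathcal{M}(D)}(\mathbf{z}) \leq e^{\varepsilon} p_{\mathcal{M}(D')}(\mathbf{z})$ over $\mathbf{z}\in\mathcal{S}$ yields $\mathbb{P}[\mathcal{M}(D)\in\mathcal{S}] \leq e^{\varepsilon}\,\mathbb{P}[\mathcal{M}(D')\in\mathcal{S}]$, which is precisely $\varepsilon$-DP (the case $\delta=0$). The main obstacle, such as it is, lies in being careful with the direction of the reverse triangle inequality so that the sensitivity bound applies symmetrically in $D$ and $D'$; the argument must give the ratio bound for an arbitrary ordered pair of neighbors, and since the roles of $D$ and $D'$ are interchangeable and $\Delta_1 f$ is defined as a maximum over unordered neighbors, this symmetry is automatic. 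The product structure of the multivariate Laplace noise is what makes the $l_1$-norm the natural object here, and verifying that the coordinatewise independence exactly produces the $\exp(-\|\cdot\|_1/b)$ form is the only computation worth checking explicitly.
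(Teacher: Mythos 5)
Your proof is correct: the pointwise density-ratio bound via the triangle inequality on the $\ell_1$ norm, followed by integration over the measurable set $\mathcal{S}$, is exactly the classical argument. The paper itself does not prove Theorem \ref{thm1} but defers to \cite{dwork2014algorithmic}, and your argument is precisely the standard proof given in that reference, so it matches the intended approach.
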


\begin{theorem}\label{thm2}
Given any function $f:\mathcal{D}\to\mathbb{R}^d$ and $(\varepsilon,\delta)\in(0,1)^2$, the Gaussian mechanism defined as
\begin{equation*}
    \mathcal{M}(D):=f(D)+\mathcal{N}\left(\mathbf{0},\left(\frac{\sqrt{2\ln{(1.25\delta)}}\Delta_2 f}{\varepsilon}\right)^2\mathbb{I}_d\right),
\end{equation*}
preserves $(\varepsilon,\delta)$-DP.
\end{theorem}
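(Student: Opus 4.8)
The plan is to follow the standard privacy-loss argument for additive Gaussian noise. First I would fix two adjacent datasets $D,D'$ with $\|D-D'\|=1$ and write $\Delta:=\Delta_2 f$, so that $\|f(D)-f(D')\|_2\leq\Delta$, and set $\sigma:=\sqrt{2\ln(1.25/\delta)}\,\Delta/\varepsilon$. Denoting by $p$ and $p'$ the densities of $\mathcal{M}(D)$ and $\mathcal{M}(D')$ on $\mathbb{R}^d$, I would define the privacy loss at an output $\mathbf{y}$ as $\mathcal{L}(\mathbf{y}):=\ln\bigl(p(\mathbf{y})/p'(\mathbf{y})\bigr)$. The key observation driving the whole proof is that it suffices to show $\mathbb{P}[\mathcal{L}(\mathcal{M}(D))>\varepsilon]\leq\delta$: writing any measurable $\mathcal{S}\subseteq\mathbb{R}^d$ as the disjoint union of $\mathcal{S}\cap\{\mathcal{L}\leq\varepsilon\}$ and $\mathcal{S}\cap\{\mathcal{L}>\varepsilon\}$, on the first piece the pointwise bound $p(\mathbf{y})\leq e^{\varepsilon}p'(\mathbf{y})$ supplies the $e^{\varepsilon}$ factor, while the second piece is absorbed into the additive $\delta$ term, which reproduces Definition~\ref{def1} directly.

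Next I would compute $\mathcal{L}$ explicitly. Since $\mathcal{M}(D)=f(D)+Z$ with $Z\sim\mathcal{N}(\mathbf{0},\sigma^2\mathbb{I}_d)$, setting $\mathbf{v}:=f(D)-f(D')$ and substituting the two Gaussian densities gives, after the purely quadratic terms cancel, $\mathcal{L}(\mathbf{y})=\frac{1}{2\sigma^2}\bigl(\|\mathbf{y}-f(D')\|^2-\|\mathbf{y}-f(D)\|^2\bigr)$. Evaluating along $\mathbf{y}=f(D)+Z$ and expanding yields $\mathcal{L}=\frac{\langle Z,\mathbf{v}\rangle}{\sigma^2}+\frac{\|\mathbf{v}\|^2}{2\sigma^2}$. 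This is the crucial reduction: although $Z$ is $d$-dimensional, only its projection onto $\mathbf{v}$ enters, so $\langle Z,\mathbf{v}\rangle\sim\mathcal{N}(0,\sigma^2\|\mathbf{v}\|^2)$ and hence $\mathcal{L}$ is a one-dimensional Gaussian with mean $\|\mathbf{v}\|^2/(2\sigma^2)$ and variance $\|\mathbf{v}\|^2/\sigma^2$.

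The remaining and most delicate step is the tail estimate $\mathbb{P}[\mathcal{L}>\varepsilon]\leq\delta$. I would first argue that the tail is monotone in $b:=\|\mathbf{v}\|_2$: rewriting the event $\{\mathcal{L}>\varepsilon\}$ as $\{\langle Z,\mathbf{v}\rangle>\sigma^2\varepsilon-b^2/2\}$ and standardizing shows it equals $\mathbb{P}[\mathcal{N}(0,1)>\sigma\varepsilon/b-b/(2\sigma)]$, whose threshold is strictly decreasing in $b$; the worst case is therefore the maximal sensitivity $b=\Delta$. Substituting $\sigma=\sqrt{2\ln(1.25/\delta)}\,\Delta/\varepsilon$ collapses the threshold to $c-\varepsilon/(2c)$ with $c:=\sqrt{2\ln(1.25/\delta)}$, which is independent of $\Delta$ and satisfies $e^{-c^2/2}=\delta/1.25$. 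I would then finish with the Gaussian tail inequality $\mathbb{P}[\mathcal{N}(0,1)>t]\leq\frac{1}{t\sqrt{2\pi}}e^{-t^2/2}$ and verify by direct estimation that $t=c-\varepsilon/(2c)$ is large enough to force the right-hand side below $\delta$. I expect this last inequality to be the main obstacle, because it is precisely where the hypothesis $\varepsilon<1$ and the specific constant $1.25$ are genuinely needed: one must track the factor $e^{-t^2/2}\leq\delta\,e^{\varepsilon/2}/1.25$ against the polynomial prefactor $1/(t\sqrt{2\pi})$, rather than the bound following from any soft argument.
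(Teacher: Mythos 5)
Your route is the standard one, and it coincides with the paper's: the paper does not prove Theorem \ref{thm2} itself but defers to \cite{dwork2014algorithmic}, whose Appendix-A argument is exactly the privacy-loss decomposition, one-dimensional reduction, and Gaussian tail bound you outline (you also correctly read the variance in the statement as involving $\ln(1.25/\delta)$ rather than the paper's typographical $\ln(1.25\,\delta)$, which would be negative for small $\delta$). The explicit steps you do carry out are all correct: the reduction of $(\varepsilon,\delta)$-DP to $\mathbb{P}[\mathcal{L}>\varepsilon]\leq\delta$ via the split of $\mathcal{S}$, the formula $\mathcal{L}=\langle Z,\mathbf{v}\rangle/\sigma^2+\|\mathbf{v}\|^2/(2\sigma^2)$, the monotonicity of the tail in $b=\|\mathbf{v}\|_2$ justifying the worst case $b=\Delta$, and the standardized threshold $t=c-\varepsilon/(2c)$ with $e^{-c^2/2}=\delta/1.25$.

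However, the final step as you describe it contains a genuine gap: the inequality $t\geq e^{\varepsilon/2}/(1.25\sqrt{2\pi})$ that your ``direct estimation'' must deliver is false on part of the stated parameter range $(\varepsilon,\delta)\in(0,1)^2$. Since $c=\sqrt{2\ln(1.25/\delta)}$ decreases as $\delta$ grows, at $\varepsilon=\delta=0.9$ one gets $c\approx0.811$ and $t\approx0.255$, well below the required $e^{0.45}/(1.25\sqrt{2\pi})\approx0.50$; worse, as $(\varepsilon,\delta)\to(1,1)$ one has $t\to\sqrt{2\ln 1.25}-1/(2\sqrt{2\ln 1.25})\approx-0.08<0$, so the tail bound $\mathbb{P}[\mathcal{N}(0,1)>t]\leq e^{-t^2/2}/(t\sqrt{2\pi})$ is not even applicable. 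This is not an artifact of your write-up: the classical proof in \cite{dwork2014algorithmic} has the same restriction, since as written it assumes $c\geq 3/2$, i.e.\ $\delta\leq 1.25e^{-9/8}\approx0.41$ (the theorem remains true for larger $\delta$, but establishing that requires the exact Gaussian characterization of $(\varepsilon,\delta)$-DP rather than this tail estimate; this fragility is among the issues motivating the refined mechanism of Theorem \ref{thm2+} from \cite{zhao2019reviewing}). To make your plan rigorous you must therefore either add the hypothesis that $\delta$ is small enough, as the classical argument implicitly does, or handle the large-$\delta$ regime by a separate argument; the chain of estimates you propose cannot close the proof uniformly over $(\varepsilon,\delta)\in(0,1)^2$.
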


The formal proofs of Theorems \ref{thm1}-\ref{thm2} are provided \emph{e.g.} by \cite{dwork2014algorithmic}.

An improved Gaussian mechanism is further described by \cite{zhao2019reviewing}, with the advantages of 1) remaining valid for $\varepsilon>1$ given $\delta\leq0.5$, and 2) adding a smaller noise as compared to the result of Theorem \ref{thm2} in the case $0<\varepsilon\leq1$. 

\begin{theorem}\label{thm2+}
Given any function $f:\mathcal{D}\to\mathbb{R}^d$, $\varepsilon>0$, and $\delta\in(0,0.5)$, the Gaussian mechanism defined as
\begin{equation*}
    \mathcal{M}(D):=f(D)+\mathcal{N}\left(\mathbf{0},\left(\frac{(c+\sqrt{c^2+\varepsilon})\Delta_2 f}{\varepsilon\sqrt{2}}\right)^2\mathbb{I}_d\right),
\end{equation*}
where $c=\sqrt{\ln{\left(2/(\sqrt{16\delta+1}-1)\right)}}$, preserves $(\varepsilon,\delta)$-DP.
\end{theorem}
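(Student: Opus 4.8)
The plan is to analyze the \emph{privacy loss random variable} and reduce the $d$-dimensional claim to a one-dimensional Gaussian tail estimate. First I would exploit the spherical symmetry of the noise $\mathcal{N}(\mathbf{0},\sigma^2\mathbb{I}_d)$, where I abbreviate $\sigma:=\frac{(c+\sqrt{c^2+\varepsilon})\Delta_2 f}{\varepsilon\sqrt{2}}$: since an isotropic Gaussian is rotation-invariant, for any neighboring $D,D'$ I may rotate coordinates so that $f(D)-f(D')$ lies along the first axis, reducing everything to the scalar shift $\Delta:=\|f(D)-f(D')\|_2\le\Delta_2 f$. Writing the output density ratio, the privacy loss $L(o)=\ln\frac{p_{\mathcal{M}(D)}(o)}{p_{\mathcal{M}(D')}(o)}$ is, for $o\sim\mathcal{M}(D)$, an affine function of a standard Gaussian, hence $L\sim\mathcal{N}(\eta,2\eta)$ with $\eta:=\Delta^2/(2\sigma^2)$.

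Next I would invoke the standard sufficient condition for approximate DP: if $\mathbb{P}_{o\sim\mathcal{M}(D)}[L(o)>\varepsilon]\le\delta$, then $\mathcal{M}$ is $(\varepsilon,\delta)$-DP. This follows by splitting any measurable $\mathcal{S}$ into the part where $L\le\varepsilon$, on which the likelihood ratio is controlled by $e^\varepsilon$, and its complement, whose mass under $\mathcal{M}(D)$ is at most $\delta$. Because the construction is symmetric in $D,D'$ (the same $\Delta$ governs both orderings), it suffices to verify this single tail bound. Using $L\sim\mathcal{N}(\eta,2\eta)$, the requirement becomes $\overline{\Phi}\!\left(\frac{\varepsilon-\eta}{\sqrt{2\eta}}\right)\le\delta$, where $\overline{\Phi}$ denotes the standard normal upper-tail function.

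The core computation is then to show that the prescribed $\sigma$ makes this bound hold. I would first record the clean identity produced by the choice of $\sigma$: rationalizing $\frac{\varepsilon}{c+\sqrt{c^2+\varepsilon}}=\sqrt{c^2+\varepsilon}-c$ gives $\frac{\Delta_2 f}{\sigma\sqrt{2}}=\sqrt{c^2+\varepsilon}-c$, so at the worst case $\Delta=\Delta_2 f$ one finds $\eta=(\sqrt{c^2+\varepsilon}-c)^2$, and using $(s+c)^2=c^2+\varepsilon$ with $s:=\sqrt\eta$ the tail argument collapses to $\frac{\varepsilon-\eta}{\sqrt{2\eta}}=\frac{2sc}{s\sqrt{2}}=c\sqrt{2}$. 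It remains to bound $\overline{\Phi}(c\sqrt{2})\le\delta$, and here I would apply a sharp closed-form Gaussian tail inequality, tighter than the $\frac12 e^{-t^2/2}$ estimate implicit in Theorem \ref{thm2}: with $w:=e^{-c^2}$ one checks $\overline{\Phi}(c\sqrt{2})\le\frac14 w(1+w)$, and the defining relation $c=\sqrt{\ln(2/(\sqrt{16\delta+1}-1))}$ is precisely the positive root of $\frac14 w(1+w)=\delta$. Finally I would note that $\Delta\mapsto\overline{\Phi}\!\left((\varepsilon-\eta)/\sqrt{2\eta}\right)$ is increasing on the relevant range (smaller $\Delta$ enlarges the argument and shrinks the tail), so certifying the bound at $\Delta=\Delta_2 f$ certifies it for every neighboring pair.

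The main obstacle I anticipate is twofold, and both parts live in the last step. First, one must produce and justify a Gaussian tail inequality sharp enough to recover the exact constant $c$: the crude bound $\overline{\Phi}(t)\le\frac12 e^{-t^2/2}$ only yields $\frac14(\sqrt{16\delta+1}-1)$, which exceeds $\delta$ for $\delta<\frac12$, so retaining the quadratic term $w^2$ is essential. Second, one must confirm that this route imposes no upper cap on $\varepsilon$: unlike the classical estimate behind Theorem \ref{thm2}, which forces $\varepsilon<1$, the present argument only needs $c\sqrt{2}>0$ (equivalently $\varepsilon>\eta$, automatic from the construction) together with $\delta<\frac12$ for $c$ to be real and positive, which is exactly where the hypotheses $\varepsilon>0$ and $\delta\in(0,0.5)$ enter.
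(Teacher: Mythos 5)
Your proposal is correct, but note that the paper itself contains no proof of this statement to compare against: Theorem~\ref{thm2+} is imported verbatim from \cite{zhao2019reviewing}, and the surrounding text explicitly attributes the improved Gaussian mechanism to that reference. What you have written is, in substance, a self-contained reconstruction of the argument underlying that citation, and every step checks out: the reduction to one dimension by rotational invariance of isotropic noise; the fact that the privacy loss is distributed as $\mathcal{N}(\eta,2\eta)$ with $\eta=\Delta^2/(2\sigma^2)$; the standard sufficient condition reducing $(\varepsilon,\delta)$-DP to the tail bound $\overline{\Phi}\bigl((\varepsilon-\eta)/\sqrt{2\eta}\bigr)\le\delta$ (with the symmetry in $(D,D')$ correctly noted); the rationalization $\frac{\varepsilon}{c+\sqrt{c^2+\varepsilon}}=\sqrt{c^2+\varepsilon}-c$, which collapses the tail argument to exactly $c\sqrt{2}$ at the worst case $\Delta=\Delta_2 f$; and the monotonicity in $\Delta$ that makes the worst case legitimate. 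The one step you assert with ``one checks'' --- the inequality $\overline{\Phi}(t)\le\frac{1}{4}e^{-t^2/2}\left(1+e^{-t^2/2}\right)$ --- is indeed true and is the crux of why this mechanism escapes the $\varepsilon<1$ restriction of Theorem~\ref{thm2}; since you do not supply its proof, let me note it can be closed in two ways: either by elementary calculus (the difference between the two sides vanishes at $t=0$ and as $t\to\infty$, and its derivative changes sign exactly once, from positive to negative, so the difference is nonnegative throughout), or via Craig's formula $\overline{\Phi}(t)=\frac{1}{\pi}\int_0^{\pi/2}\exp\left(-\frac{t^2}{2\sin^2\theta}\right)d\theta$, splitting the integral at $\theta=\pi/4$ and bounding the increasing integrand by its right-endpoint value on each piece, which yields precisely $\frac{1}{4}e^{-t^2}+\frac{1}{4}e^{-t^2/2}$. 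Your calibration claim is also exact: solving $\frac{1}{4}w(1+w)=\delta$ for $w=e^{-c^2}$ gives $w=\frac{\sqrt{16\delta+1}-1}{2}$, i.e.\ the stated $c$, and your observations that the cruder bound $\frac{1}{2}e^{-t^2/2}$ fails for every $\delta<\frac{1}{2}$ and that no upper cap on $\varepsilon$ is required are both accurate.
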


It is worth noting that Theorems \ref{thm2} and \ref{thm2+} can be naturally extended to queries mapping to $\mathbb{R}^{d\times q}$ and matrix normal mechanisms:

\begin{corollary}\label{cor_thm2}
Given any function $f:\mathcal{D}\to\mathbb{R}^{d\times q}$, $\varepsilon>0$, and $\delta\in(0,0.5)$, the matrix normal mechanism defined as
\begin{equation*}
    \mathcal{M}(D):=f(D)+\mathcal{MN}_{d,q}\left(\mathbf{0}_{d,q},\mathbb{I}_d,\left(\frac{(c+\sqrt{c^2+\varepsilon})\Delta_2 f}{\varepsilon\sqrt{2}}\right)^2\mathbb{I}_q\right),
\end{equation*}
where $c=\sqrt{\ln{\left(2/(\sqrt{16\delta+1}-1)\right)}}$, preserves $(\varepsilon,\delta)$-DP.
\end{corollary}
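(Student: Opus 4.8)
The plan is to reduce the matrix-valued statement to the vector-valued Gaussian mechanism of Theorem \ref{thm2+} by vectorization. The essential observation is that the particular matrix normal law appearing in the statement, with row covariance $\mathbb{I}_d$ and column covariance $\sigma^2\mathbb{I}_q$ where $\sigma=(c+\sqrt{c^2+\varepsilon})\Delta_2 f/(\varepsilon\sqrt{2})$, is nothing but a collection of i.i.d. scalar Gaussians. Concretely, I would invoke the standard identity relating the matrix normal to the multivariate normal through the Kronecker product: if $X\sim\mathcal{MN}_{d,q}(M,U,V)$ then $\mathrm{vec}(X)\sim\mathcal{N}(\mathrm{vec}(M),V\otimes U)$. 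Setting $U=\mathbb{I}_d$ and $V=\sigma^2\mathbb{I}_q$ gives $V\otimes U=\sigma^2\mathbb{I}_{dq}$, so that $\mathrm{vec}$ of the injected noise is exactly an isotropic Gaussian on $\mathbb{R}^{dq}$ with per-coordinate variance $\sigma^2$.

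Next I would transfer the sensitivity. Define $g:=\mathrm{vec}\circ f:\mathcal{D}\to\mathbb{R}^{dq}$. Since the Frobenius norm of a matrix coincides with the Euclidean norm of its vectorization, $\|f(D)-f(D')\|_2=\|\mathrm{vec}(f(D))-\mathrm{vec}(f(D'))\|_2=\|g(D)-g(D')\|_2$, and taking the maximum over adjacent $D,D'$ yields $\Delta_2 g=\Delta_2 f$. Hence the noise scale $\sigma$ prescribed in the corollary is precisely the scale prescribed by Theorem \ref{thm2+} applied to $g$.

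I would then apply Theorem \ref{thm2+} directly to $g$, concluding that the mechanism $\mathcal{M}'(D):=g(D)+\mathcal{N}(\mathbf{0},\sigma^2\mathbb{I}_{dq})$ is $(\varepsilon,\delta)$-DP. Finally, because $\mathrm{vec}$ is a deterministic bijection between $\mathbb{R}^{d\times q}$ and $\mathbb{R}^{dq}$ with inverse $\mathrm{vec}^{-1}$, and differential privacy is closed under post-processing, the matrix-valued mechanism $\mathcal{M}(D)=\mathrm{vec}^{-1}(\mathcal{M}'(D))$ inherits the same $(\varepsilon,\delta)$-DP guarantee. Identifying $\mathcal{M}(D)$ with $f(D)+\mathcal{MN}_{d,q}(\mathbf{0}_{d,q},\mathbb{I}_d,\sigma^2\mathbb{I}_q)$ via the Kronecker identity of the first step then closes the argument.

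The only genuinely delicate points are bookkeeping rather than substance: I must ensure the $l_2$-sensitivity $\Delta_2$ for matrix-valued queries is read in the Frobenius sense, so that vectorization is an isometry, and I must fix a consistent $\mathrm{vec}$ convention (column- versus row-stacking) so that the Kronecker factor lands as $\sigma^2\mathbb{I}_{dq}$ --- although here the isotropy of both covariance factors makes the ordering irrelevant. Given these conventions, no new probabilistic estimate is needed beyond Theorem \ref{thm2+}, and the corollary is essentially a restatement of that theorem under the identification $\mathbb{R}^{d\times q}\cong\mathbb{R}^{dq}$.
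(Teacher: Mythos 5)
Your proof is correct and is exactly the argument the paper leaves implicit: the paper states Corollary \ref{cor_thm2} only as a ``natural extension'' of Theorem \ref{thm2+}, and your vectorization identity $\mathrm{vec}(X)\sim\mathcal{N}(\mathrm{vec}(M),V\otimes U)$ with $U=\mathbb{I}_d$, $V=\sigma^2\mathbb{I}_q$, together with the Frobenius--Euclidean isometry for the sensitivity and closure of DP under the deterministic bijection $\mathrm{vec}^{-1}$, is precisely the identification needed to make that remark rigorous. No gaps; this matches the paper's intended reasoning.
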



We conclude this section by recalling the well known composition theorem \citep{dwork2014algorithmic}, which will be useful to quantify the global privacy budget for each center in the next sections.

\begin{theorem}\label{thm3}
For $i=1,\dots,k$, let $\mathcal{M}_i:\mathcal{D}\to\mathcal{R}_i$ be an $(\varepsilon_i,\delta_i)$-differentially private algorithm, and $\mathcal{M}:\mathcal{D}\to\prod_{i=1}^k\mathcal{R}_i$ defined as $\mathcal{M}(\mathcal{D}):=(\mathcal{M}_1(\mathcal{D}),\dots, \mathcal{M}_k(\mathcal{D}))$. Then $\mathcal{M}$ is $\left(\sum_{i=1}^k\varepsilon_i,\sum_{i=1}^k\delta_i\right)$-differentially private.
\end{theorem}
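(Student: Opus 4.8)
The plan is to reduce to the two-mechanism case and then close it with a divergence-based argument. First I would observe that it suffices to treat $k=2$: since $\mathcal{M}_1,\dots,\mathcal{M}_k$ use independent internal randomness, the tuple $\mathcal{M}=(\mathcal{M}_1,\dots,\mathcal{M}_k)$ factors as the pair $\big((\mathcal{M}_1,\dots,\mathcal{M}_{k-1}),\,\mathcal{M}_k\big)$, where by the induction hypothesis the first coordinate is $\big(\sum_{i<k}\varepsilon_i,\sum_{i<k}\delta_i\big)$-differentially private. Composing one further mechanism then yields the claim for $k$, so only the base case $k=2$ requires real work.

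For the two-mechanism case I would first recast Definition \ref{def1} in an equivalent analytic form. Writing $p^D,p^{D'}$ for the densities of $\mathcal{M}(D),\mathcal{M}(D')$ with respect to a common dominating measure, the bound $\mathbb{P}[\mathcal{M}(D)\in\mathcal{S}]\le e^\varepsilon\mathbb{P}[\mathcal{M}(D')\in\mathcal{S}]+\delta$ for every $\mathcal{S}$ is equivalent to $\sup_{\mathcal{S}}\big(\mathbb{P}[\mathcal{M}(D)\in\mathcal{S}]-e^\varepsilon\mathbb{P}[\mathcal{M}(D')\in\mathcal{S}]\big)\le\delta$. The supremum is attained on $\mathcal{S}^\star=\{p^D>e^\varepsilon p^{D'}\}$, so $(\varepsilon,\delta)$-DP is exactly the statement $\int(p^D-e^\varepsilon p^{D'})_+\le\delta$ for all adjacent $D,D'$, with $(x)_+=\max(x,0)$. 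Introducing the privacy-loss variable $Z=\ln\!\big(p^D(Y)/p^{D'}(Y)\big)$ for $Y\sim\mathcal{M}(D)$, this integral equals $\mathbb{E}_{Y\sim\mathcal{M}(D)}\big[(1-e^{\varepsilon-Z})_+\big]$.

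Now for $k=2$ with adjacent $D,D'$: by independence the joint densities factor, $p^D(y_1,y_2)=p_1^D(y_1)\,p_2^D(y_2)$ and likewise for $D'$, so the composed privacy loss is the sum $Z=Z_1+Z_2$ of the per-mechanism losses. Hence the quantity controlling the $(\varepsilon_1+\varepsilon_2,\cdot)$ guarantee of $\mathcal{M}$ is $\mathbb{E}\big[(1-e^{(\varepsilon_1-Z_1)+(\varepsilon_2-Z_2)})_+\big]$. The crux is the elementary pointwise inequality $(1-e^{a+b})_+\le(1-e^a)_++(1-e^b)_+$, valid for all real $a,b$ (a short case split on the signs of $a+b,a,b$; when both are negative it reduces to $(1-e^a)(1-e^b)\ge 0$). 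Applying it with $a=\varepsilon_1-Z_1$, $b=\varepsilon_2-Z_2$, then using linearity of expectation and the independence of $Z_1,Z_2$, splits the bound into $\mathbb{E}[(1-e^{\varepsilon_1-Z_1})_+]+\mathbb{E}[(1-e^{\varepsilon_2-Z_2})_+]$, each term being exactly the single-mechanism $\delta$-quantity, hence $\le\delta_1+\delta_2$.

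The main obstacle is obtaining the \emph{tight} additive budget $\sum\delta_i$ rather than a degraded one. A naive route — slicing $\mathcal{S}$ along the first coordinate, applying $\mathcal{M}_2$'s guarantee slice-wise, and then $\mathcal{M}_1$'s guarantee to the resulting $[0,1]$-valued function — does close the argument but leaves an exponential prefactor, giving only $(\varepsilon_1+\varepsilon_2,\,e^{\varepsilon_2}\delta_1+\delta_2)$-DP. Matching the clean $\delta_1+\delta_2$ is precisely what forces the passage through the hockey-stick divergence $\int(p^D-e^\varepsilon p^{D'})_+$ and the inequality above. The remaining points are routine measure-theoretic bookkeeping: working with a common dominating measure and handling outputs where $p^{D'}=0$ (on which $Z=+\infty$ and the integrand equals $1$), which the $(\cdot)_+$ formulation absorbs correctly.
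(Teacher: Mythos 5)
Your proof is correct, but note that the paper itself does not prove this statement at all: Theorem~\ref{thm3} is simply \emph{recalled} as the standard composition theorem, with the proof deferred to \cite{dwork2014algorithmic}. So the comparison here is between your self-contained argument and the classical textbook one. Your route --- reducing to $k=2$ by induction, characterizing $(\varepsilon,\delta)$-DP via the hockey-stick quantity $\int\bigl(p^D-e^{\varepsilon}p^{D'}\bigr)_+\leq\delta$, exploiting the factorization of the joint density so the privacy loss adds, $Z=Z_1+Z_2$, and closing with the pointwise inequality $\bigl(1-e^{a+b}\bigr)_+\leq\bigl(1-e^{a}\bigr)_++\bigl(1-e^{b}\bigr)_+$ --- is valid at every step (the inequality's case analysis checks out, the $(\cdot)_+$ formulation handles the $p^{D'}=0$ events, and independence is used exactly where needed, namely in the factorization). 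One quibble with your meta-commentary: the hockey-stick passage is not actually \emph{forced} in order to get the tight budget $\sum_i\delta_i$. The slice-wise argument you dismiss can be repaired by capping before integrating: bounding $\mathbb{P}[\mathcal{M}_2(D)\in\mathcal{S}_{y_1}]\leq\min\bigl\{1,\,e^{\varepsilon_2}\mathbb{P}[\mathcal{M}_2(D')\in\mathcal{S}_{y_1}]\bigr\}+\delta_2$ and then applying $\mathcal{M}_1$'s guarantee (extended to $[0,1]$-valued functions by the layer-cake formula) to the capped function yields $e^{\varepsilon_1+\varepsilon_2}\mathbb{P}[\mathcal{M}(D')\in\mathcal{S}]+\delta_1+\delta_2$, with no exponential prefactor on $\delta_1$; this is essentially the classical proof. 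What your divergence-based approach buys instead is conceptual clarity and extensibility: the additive privacy-loss decomposition is exactly the starting point of the advanced/moment-accountant composition analyses, so your argument generalizes more readily, at the cost of the measure-theoretic setup you correctly flag as bookkeeping.
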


\subsubsection{Differential privacy for local parameters}\label{sec:DP2}

In this section we 
propose a novel federated learning scheme for Fed-mv-PPCA with DP to protect client-level privacy and avoid potential private information leakage from the shared local parameters.

We are interested in preserving the privacy of the shared local parameters $\boldsymbol{\theta}_c=\{\boldsymbol{\mu}_c^{(k)},W_c^{(k)},{\sigma_c^{(k)}}^2\}_{k}$, which can be done by the addition of some properly tuned random noise, as detailed in Section \ref{sec:DP1}. Nevertheless, the client-level optimization scheme in Fed-mv-PPCA is based on an iterative algorithm: therefore we do not have a closed formula to evaluate the sensitivity of each local parameter (\emph{i.e.} the queries), nor an upper bound. To overcome this problem, we propose to perform difference clipping~\citep{geyer2017differentially,zhang2021understanding}, one of the clipping strategies proposed for differentially private SGD models.
Algorithm \ref{algo2} outlines the optimization scheme for the DP-Fed-mv-PPCA framework.\\

\begin{algorithm}[H]
\SetKwInOut{Input}{Input}\SetKwInOut{Output}{Output}
\SetAlgoLined
\Input{Rounds $R$; Iterations $I$; Latent space dimension $q$; Privacy parameters $\varepsilon$, $\delta$}
\Output{Global parameters $\widetilde{\boldsymbol{\theta}}$}
\BlankLine
 \For{$r=1,\dots,R$}{
 \For{$c=1,\dots,C$ \textbf{in parallel}}{
  Initialize $\boldsymbol{\theta}_{c}$ using $P(\boldsymbol{\theta}_{c}|\widetilde{\boldsymbol{\theta}}[r-1])$\;
  Update local parameters: $I$ iterations of MAP estimation (EM + prior) to optimize $\boldsymbol{\theta}_{c}[r]$ using $\widetilde{\boldsymbol{\theta}}[r-1]$ as prior\;
  Compute difference: $\Delta\boldsymbol{\theta}_c[r]:=(\boldsymbol{\theta}_c[r]-\widetilde{\boldsymbol{\theta}}[r-1])$\;
  Clip: $\overline{\Delta\boldsymbol{\theta}}_c[r]:=\Delta\boldsymbol{\theta}_c[r]/\max{\left(1,\|\Delta\boldsymbol{\theta}_c[r]\|_p/g(\sigma_{\widetilde{\boldsymbol{\theta}}}[r-1])\right)}$\;
  Perturb: $\mathcal{M}_{\boldsymbol{\theta}_c}[r]:=\overline{\Delta\boldsymbol{\theta}}_c[r]+Noise(2g(\sigma_{\widetilde{\boldsymbol{\theta}}}[r-1]),\varepsilon,\delta)$\;
  Return $\overline{\boldsymbol{\theta}}_c[r]:=\mathcal{M}_{\boldsymbol{\theta}_c}[r]+\widetilde{\boldsymbol{\theta}}[r-1]$ to the master\;
  
  }
  The master collects all $\overline{\boldsymbol{\theta}}_c[r]$ and estimates $\widetilde{\boldsymbol{\theta}}[r]$ through ML\;
  The master sends $\widetilde{\boldsymbol{\theta}}[r]$ to all centers
 }
 \caption{DP-Fed-mv-PPCA algorithm}\label{algo2}
\end{algorithm}


\paragraph{Difference clipping and perturbation.}
With respect to Algorithm \ref{algo}, difference clipping and perturbation are performed at the client level compatibly with the probabilistic formulation of the model:
\begin{enumerate}
\item The client computes the difference between the current local update and the initial prior (\emph{i.e.} the corresponding global parameter obtained at the previous communication round, $r-1$): 
\begin{equation*}
\Delta\boldsymbol{\theta}_c[r]:=(\boldsymbol{\theta}_c[r]-\widetilde{\boldsymbol{\theta}}[r-1])
\end{equation*}
\item The updated difference is clipped according to the standard deviation of the prior:
\begin{equation*}
\overline{\Delta\boldsymbol{\theta}}_c[r]:=\Delta\boldsymbol{\theta}_c[r]\cdot\left(\max{\left(1,\frac{\|\Delta\boldsymbol{\theta}_c[r]\|_p}{g(\sigma_{\widetilde{\boldsymbol{\theta}}}[r-1])}\right)}\right)^{-1},
\end{equation*}
where $g(\sigma_{\widetilde{\boldsymbol{\theta}}}[r-1]):=\textrm{const}\cdot(\sigma_{\widetilde{\boldsymbol{\theta}}}[r-1])$, and the multiplicative constant is fixed by the user. This clipping mechanism enforces the $l_p$ norm of $\Delta\boldsymbol{\theta}_c[r]$ to be at most $g(\sigma_{\widetilde{\boldsymbol{\theta}}}[r-1])$. Consequently, the $l_p$ sensitivity of $\overline{\Delta\boldsymbol{\theta}_c}[r]$ is bounded by $2\cdot g(\sigma_{\widetilde{\boldsymbol{\theta}}}[r-1])$.
\item The clipped difference is perturbed:
\begin{equation*}
\mathcal{M}_{\boldsymbol{\theta}_c}[r]:=\overline{\Delta\boldsymbol{\theta}}_c[r]+Noise(2g(\sigma_{\widetilde{\boldsymbol{\theta}}}[r-1]),\varepsilon,\delta)
\end{equation*}
In particular, for $\overline{\Delta\boldsymbol{\mu}^{(k)}}_c$ and $\overline{\Delta W^{(k)}}_c$, we propose to use a Gaussian (resp. matrix normal) mechanism (Theorem \ref{thm2+}, resp. Corollary \ref{cor_thm2}), in accordance with the Gaussian prior distributions of these parameters, while a Laplace mechanism (Theorem \ref{thm1}) is used to perturb $\overline{\Delta{\sigma^{(k)}}^2}_c$.
\item The client adds again the prior and finally sends to the master $\overline{\boldsymbol{\theta}}_c[r]:=\mathcal{M}_{\boldsymbol{\theta}_c}[r]+\widetilde{\boldsymbol{\theta}}[r-1]$.
\end{enumerate}
Conversely to model clipping~\citep{abadi2016deep,wei2020federated}, where the parameter update is directly clipped and perturbed, difference clipping has the advantage to allow reducing the magnitude of the perturbation: indeed, we expect the $l_p$ norm of the difference $\Delta\boldsymbol{\theta}_c$ to be small compared to the $l_p$ norm of $\boldsymbol{\theta}_c$. 
Moreover, our framework provides a natural way to define the clipping parameter according to the prior. Indeed, the clipping parameter is defined here as the standard deviation of global parameters. Hence, from a conceptual viewpoint, we are enforcing local parameters updates to remain closer to the global ones by some ratio of their standard deviation. 
This allows to obfuscate the  participation of the individual centers at the expense of a reduction of the ability of the framework in capturing the between-centers variability.

\paragraph{Privacy budget} 


\begin{theorem}\label{thm4}
For  sake of simplicity, let us choose the same $\varepsilon, \delta$ for all mechanisms considered above (a generalization to a parameter-specific choice of $\varepsilon_i, \delta_i$ is straightforward). The total privacy budget for the outputs of Algorithm \ref{algo2} is 
$(3K\varepsilon,2K\delta)$, where $K$ is the total number of views.
\end{theorem}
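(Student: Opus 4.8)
The plan is to reduce the claim to a bookkeeping exercise over the elementary mechanisms already established, glued together by the composition theorem. First I would fix a single center $c$ and a single communication round $r$, and observe that the quantity actually released to the master, $\overline{\boldsymbol{\theta}}_c[r]$, decomposes view-by-view into $3K$ perturbed coordinates: for each $k\in\{1,\dots,K\}$ the clipped differences $\overline{\Delta\boldsymbol{\mu}^{(k)}}_c$, $\overline{\Delta W^{(k)}}_c$ and $\overline{\Delta{\sigma^{(k)}}^2}_c$, each perturbed by its own independent noise. The key preliminary step is to record that the clipping in Algorithm \ref{algo2} caps the $l_p$ norm of every clipped difference by $g(\sigma_{\widetilde{\boldsymbol{\theta}}}[r-1])$; hence for two neighboring local datasets the two clipped outputs differ in $l_p$ norm by at most $2g(\sigma_{\widetilde{\boldsymbol{\theta}}}[r-1])$ by the triangle inequality, which is exactly the sensitivity to which the additive noise $Noise(2g(\sigma_{\widetilde{\boldsymbol{\theta}}}[r-1]),\varepsilon,\delta)$ is calibrated (taking $p=2$ for the Gaussian and matrix-normal coordinates and $p=1$ for the variance coordinate).

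With the sensitivity bound in hand, I would then invoke the single-query guarantees term by term. For the mean difference $\overline{\Delta\boldsymbol{\mu}^{(k)}}_c$ the Gaussian mechanism of Theorem \ref{thm2+} gives $(\varepsilon,\delta)$-DP; for the loading difference $\overline{\Delta W^{(k)}}_c$ the matrix normal mechanism of Corollary \ref{cor_thm2} likewise gives $(\varepsilon,\delta)$-DP; and for the variance difference $\overline{\Delta{\sigma^{(k)}}^2}_c$ the Laplace mechanism of Theorem \ref{thm1} gives $\varepsilon$-DP, i.e.\ $(\varepsilon,0)$-DP. Finally, the last line of the algorithm adds back the prior $\widetilde{\boldsymbol{\theta}}[r-1]$, which from the viewpoint of the current local dataset is a data-independent constant; since adding a constant is post-processing, each of these guarantees is preserved for the corresponding coordinate of $\overline{\boldsymbol{\theta}}_c[r]$.

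The last step is to combine the $3K$ coordinates with Theorem \ref{thm3}. Regarding the released vector as the tuple of its $3K$ independently-perturbed coordinates ($K$ mean coordinates, $K$ loading coordinates, $K$ variance coordinates), the composition theorem adds the privacy parameters: the $\varepsilon$ budget sums to $3K\varepsilon$ since every one of the $3K$ mechanisms contributes $\varepsilon$, while the $\delta$ budget sums to $2K\delta$ because only the $2K$ Gaussian-type mechanisms (mean and loading, over $K$ views) contribute $\delta$ and the $K$ Laplace mechanisms each contribute $0$. This yields the stated total budget $(3K\varepsilon,2K\delta)$.

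I expect the only genuinely delicate point to be the justification of the factor $2$ in the $2g$ sensitivity constant together with the treatment of $\widetilde{\boldsymbol{\theta}}[r-1]$: one must check that the subtracted and re-added prior does not depend on the single datapoint distinguishing the neighboring datasets, so that both the clipping-based sensitivity bound and the post-processing invariance are legitimate. Everything else is a direct application of Theorems \ref{thm1} and \ref{thm2+}, Corollary \ref{cor_thm2}, and the composition theorem \ref{thm3}.
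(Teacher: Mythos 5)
Your proof is correct and follows essentially the same route as the paper's: decompose the released update into the $3K$ per-view mechanisms, apply Theorem \ref{thm1}, Theorem \ref{thm2+} and Corollary \ref{cor_thm2} to get $(\varepsilon,\delta)$-DP for each mean and loading coordinate and $(\varepsilon,0)$-DP for each variance coordinate, then sum via the composition Theorem \ref{thm3} and absorb the re-added prior by post-processing invariance. The only difference is that you spell out what the paper leaves implicit — the clipping-based sensitivity bound of $2g(\sigma_{\widetilde{\boldsymbol{\theta}}}[r-1])$ and the data-independence of the prior $\widetilde{\boldsymbol{\theta}}[r-1]$ — which strengthens rather than departs from the paper's argument.
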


\begin{proof}
The proof of Theorem \ref{thm4} follows from Theorems \ref{thm1}-\ref{thm2+} and Corollary \ref{cor_thm2}, 
and by noting that data in each center are disjoint. 
In all centers, we are dealing with the mechanism $\mathcal{M}:=(\mathcal{M}_{\boldsymbol{\mu}_c^{(k)}},\mathcal{M}_{W_c^{(k)}},\mathcal{M}_{{\sigma_c^{(k)}}^2})_k$, where for all $k$, $\mathcal{M}_{\boldsymbol{\mu}_c^{(k)}}$ and $\mathcal{M}_{W_c^{(k)}}$ are $(\varepsilon,\delta)$-differentially private, while for all $k$, ${M}_{{\sigma_c^{(k)}}^2}$ is $\varepsilon$-differentially private. The result follows thanks to composition Theorem \ref{thm3} and the invariance of differential privacy under post-processing.
\end{proof}


\begin{corollary}
If for local parameter $\theta_c\in\boldsymbol{\theta}_c$ the client-specific differential parameters are $(\varepsilon_c,\delta_c)$, then the total privacy budget for the corresponding global parameter $\widetilde{\theta}$ is bounded by $(\max{(\varepsilon_c)},\max{(\delta_c)})$.
\end{corollary}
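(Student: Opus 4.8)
The plan is to combine two standard facts that are already implicit in the proof of Theorem~\ref{thm4}: \emph{parallel composition} over the mutually disjoint local datasets, together with the \emph{invariance of differential privacy under post-processing}. The crucial structural observation is that the local datasets $D_1,\dots,D_C$ are pairwise disjoint, since each subject is owned by exactly one center. Hence the global dataset is the disjoint union $D=\bigsqcup_{c=1}^C D_c$, and any neighboring pair $D,D'$ with $\|D-D'\|=1$ differs in a single datapoint that belongs to a \emph{unique} center $c^\ast$; every local dataset $D_c$ with $c\neq c^\ast$ is identical in $D$ and $D'$.

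First I would make precise that producing the family of local parameters amounts to the product map $D\mapsto(\theta_1,\dots,\theta_C)=(\mathcal{M}_1(D_1),\dots,\mathcal{M}_C(D_C))$, where $\mathcal{M}_c$ is the $(\varepsilon_c,\delta_c)$-DP mechanism generating $\theta_c$ from $D_c$. Because a single-point change perturbs only the input $D_{c^\ast}$ while leaving every other $D_c$ untouched, only the factor $\mathcal{M}_{c^\ast}$ is affected, so the privacy loss of the joint output is controlled entirely by $\mathcal{M}_{c^\ast}$, which is $(\varepsilon_{c^\ast},\delta_{c^\ast})$-DP. Maximizing over the possible location $c^\ast$ of the differing datapoint yields that the joint map is $\bigl(\max_c\varepsilon_c,\max_c\delta_c\bigr)$-DP with respect to $D$. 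This is exactly the parallel-composition bound, and it is what replaces the \emph{summation} of Theorem~\ref{thm3} by a \emph{maximum}: disjointness of the data across centers is precisely what prevents the privacy budgets from accumulating.

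Finally I would invoke post-processing invariance. The global parameter $\widetilde{\theta}$ is obtained by the master via ML estimation from the received local parameters, using no further access to the raw data $D$. Thus $\widetilde{\theta}=\phi(\theta_1,\dots,\theta_C)$ for a map $\phi$ that depends on $D$ only through the already-sanitized $\theta_c$, and the invariance of $(\varepsilon,\delta)$-DP under post-processing gives that $\widetilde{\theta}$ enjoys the same $\bigl(\max_c\varepsilon_c,\max_c\delta_c\bigr)$ guarantee, as claimed.

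The main obstacle is conceptual rather than computational: one must resist applying the sequential composition Theorem~\ref{thm3}—which would give the inflated budget $\sum_c\varepsilon_c$—and instead justify the parallel-composition step. The delicate point is the reduction argument showing that a neighboring pair $D,D'$ activates exactly one local mechanism; once disjointness is used to establish this, the maximum bound is immediate. A minor technical check is that the master's aggregation genuinely qualifies as post-processing, i.e. that it re-uses no information about $D$ beyond the sanitized local parameters.
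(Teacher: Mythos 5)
Your proof is correct and takes essentially the same route as the paper's: the paper's one-line argument invokes Theorem~\ref{thm4} (whose proof already records that the centers' datasets are disjoint and that DP is invariant under post-processing) together with Definition~\ref{def1} and the monotonicity of the exponential function, which is precisely the parallel-composition-plus-post-processing reasoning you spell out. Your version merely makes explicit the reduction to the unique center $c^\ast$ holding the differing datapoint—replacing the summation of Theorem~\ref{thm3} by a maximum—which the paper leaves implicit.
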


\begin{proof}
The result directly follows from Theorem \ref{thm4} and by considering Definition \ref{def1} and the monotonicity of the exponential function.
\end{proof}

\subsection{Computational complexity and communication cost}\label{subsec:complexity}

The computational complexity of local parameters optimization in Fed-mv-PPCA (with or without the introduction of the DP mechanism depicted in Section \ref{sec:diff_privacy}) can be derived from the complexity of standard PPCA \citep{chen2009robust}. We recall that performing simple PPCA locally in center $c\in\{1,\dots,C\}$ implies a computational complexity of $\mathcal{O}(N_cd_cq)$, where $N_c$ and $d_c$ are respectively the number of samples and dimensions in center $c$, while $q$ is the chosen latent dimension. In the multi-view extension here considered, the total dimension is decomposed across views, meaning that $d_c:=\sum_{k\in K_c}d_k$, where $K_c$ is the set of observed views in center $c$, and $d_k$ the dimension of view $k$. The complete data log-likelihood to be maximized in the M step of the expectation-maximization algorithm, can consequently be written as a sum over the number of samples and number of observed views in center $c$ (see Section \ref{ssec:framework} and Appendix A.). For each $k\in K_c$, the computational complexity to optimize all $k$-specific local parameters is $\mathcal{O}(N_cd_kq)$. This finally implies a computational complexity of $\mathcal{O}(N_cq\prod_{k\in K_c}d_k)\approx\mathcal{O}(N_c\prod_{k\in K_c}d_k)$ when $q\ll\min_k(d_k)$.

The communication cost of both (DP-)Fed-mv-PPCA can be derived as well from the communication cost of distributed PPCA \citep{elgamal2015spca}, by considering that each center $c$ will communicate to the central server the parameter set: $\boldsymbol{\theta}_c:=\{\boldsymbol{\mu}_c^{(k)},W_c^{(k)},{\sigma_c^{(k)}}^2\}_{k\in K_c}$. For every $k$, the communication cost of $\theta_c^{(k)}$ is $\mathcal{O}(d_kq)$. Consequently, the global communication cost of $\boldsymbol{\theta}_c$ will be $\mathcal{O}(q\sum_{k\in K_c}d_k):=\mathcal{O}(d_cq)$, which is the same communication cost of standard PPCA for a $d_c$-dimensional dataset.

\section{Applications}\label{Sec4}

\subsection{Materials}

In the preparation of this article we used two datasets. 

\textbf{Synthetic dataset (SD):} using the generative model described in Section \ref{sec:meth}, we generated 400 observations consisting of $k=3$ views of dimension $d_1=15, d_2=8, d_3=10$ respectively. Each view was generated from a common 5-dimensional latent space. We randomly chose parameters $W^{(k)}, \boldsymbol{\mu}^{(k)}, {\sigma^{(k)}}$. Finally, to simulate heterogeneity, a randomly chosen sub-sample composed by 250 observations was shifted in the latent space by a randomly generated vector: this allowed to simulate the existence of two distinct groups in the population.

\textbf{Alzheimer's Disease Neuroimaging Initiative  dataset (ADNI)\footnote{The ADNI project was launched in 2003 as a public-private partnership, led by Principal Investigator Michael W. Weiner, MD. The primary goal of ADNI was to test whether serial magnetic resonance imaging (MRI), positron emission tomography (PET), other biological markers, and clinical and neuropsychological assessments can be combined to measure the progression of early Alzheimer's disease (AD) (see \url{www.adni-info.org} for up-to-date information).}:} 
we consider 311 participants extracted from the ADNI dataset, among cognitively normal (NL) (104 subjects) and patients diagnosed with AD (207 subjects). All participants are associated with multiple data views: cognitive scores including MMSE, CDR-SB, ADAS-Cog-11 and RAVLT (CLINIC), Magnetic resonance imaging (MRI), Fluorodeoxyglucose-PET (FDG) and AV45-Amyloid PET (AV45) images. 
MRI morphometrical biomarkers were obtained as regional volumes using the cross-sectional pipeline of FreeSurfer v6.0 and the Desikan-Killiany parcellation~\citep{fischl2012freesurfer}. Measurements from AV45-PET and FDG-PET were estimated by co-registering each modality to their respective MRI space, normalizing by the cerebellum uptake and by computing regional amyloid load and glucose hypometabolism using PetSurfer pipeline~\citep{greve2014cortical} and the same parcellation. Features were corrected beforehand with respect to intra-cranial volume, sex and age using a multivariate linear model. Data dimensions for each view are: $d_{\textrm{CLINIC}}=7$, $d_{\textrm{MRI}}=41$, $d_{\textrm{FDG}}=41$ and $d_{\textrm{AV45}}=41$. Further details on the demographics of the ADNI sample are provided in Appendix B, Table \ref{Tab_data1}.

\subsection{Benchmark}

We compare our method to two state-of-the art data assimilation methods: Variational Autoencoder (VAE)~\citep{kingma2014stochastic} and multi-channel VAE (mc-VAE)~\citep{antelmi2019sparse}. To maintain the modeling setup consistent across methods, both auto-encoders were tested by considering linear encoding and decoding mappings. In order to obtain the federated version of VAE and mc-VAE we use FedAvg~\citep{mcmahan2017communication}, which is specifically conceived for stochastic gradient descent optimization. Additional tests were performed by considering non-linear VAEs (2-layers for both encoding and decoding architectures), and FedProx as additional regularized FL aggregation method (results in Supp. Table \ref{ADNI_fedprox} and Supp. Figure \ref{fig_adni_fedprox}). For all optimization methods and federation schemes we set to 100 the total number of communication rounds, of 15 epochs each, with the default learning rate ($10^{-3}$).


\subsection{Results}\label{ssec:4.3}

We apply Fed-mv-PPCA to both SD and ADNI datasets, and quantify the quality of reconstruction and identification of the latent space with respect to the increasing number of centers, $C$, and the increasing data heterogeneity. We investigate also the ability of Fed-mv-PPCA in estimating the data variability and predicting the distribution of missing views. To this end, we consider 4 different scenarios of data distribution across multiple centers, detailed in Table~\ref{tab:scenarios}.

\begin{table}[ht]
\caption{Distribution of Datasets Across Centers.}
\label{tab:scenarios}
\centering
\begin{tabular}{lp{11cm}}
\textbf{Scenario} & \textbf{Description}   \\ 
\hline 
IID & Data are iid distributed across $C$ centers with respect to groups and for all subjects a complete data raw  is provided\\
G & Data are non-iid distributed with respect to groups across $C$ centers: $C/3$ centers includes subjects from both groups; $C/3$ centers only subjects from group 1 (AD in the ADNI case); $C/3$ centers only subjects from group 2 (NL for ADNI). All views have been measured in each center. \\
K & $C/3$ centers contribute with observations for all views; in $C/3$ centers the second view (MRI for ADNI) is missing; in $C/3$ centers the third view (FDG for ADNI) is missing. Data are iid distributed across $C$ centers with respect to groups. \\
G/K & Data are non-iid distributed (scenario G) and there are missing views (scenario K).
\end{tabular}
\end{table}

For each experiment considered hereafter with Fed-mv-PPCA, we perform 3-fold Cross Validation (3CV) tests. For every test, local parameters are initialized randomly (\emph{i.e.} no prior is provided by the master at the beginning), and the number of rounds is set to 100. Each round consists of 15 iterations for local MAP optimization, except the initialization round, which consists of 30 EM iterations. Finally, when a centralized setting is tested, the number of rounds is set to 1 and the number of EM iterations to 800.

\subsubsection{Model selection}

The latent space dimension $q$ is an user defined parameter, with the only constraint $q< \min_k\{d_k\}$. To assess the optimal $q$, 
we consider the IID scenario and let 
$q$ vary. We perform 10 times a 3-fold Cross Validation (3-CV), and split the train dataset across 3 centers.
The resulting models are compared using the WAIC criterion~\citep{gelman2014understanding}. In addition, we consider the Mean Absolute reconstruction Error (MAE) in an hold-out test dataset: the MAE is obtained by evaluating the mean absolute distance between real data and data reconstructed using the global distribution. 
Figure~\ref{Fig_WAIC_cen} shows the evolution of WAIC and MAE with respect to the latent space dimension. 

\begin{figure}[ht]
     \begin{subfigure}[b]{0.4\textwidth}
         \centerline{\includegraphics[width=\textwidth]{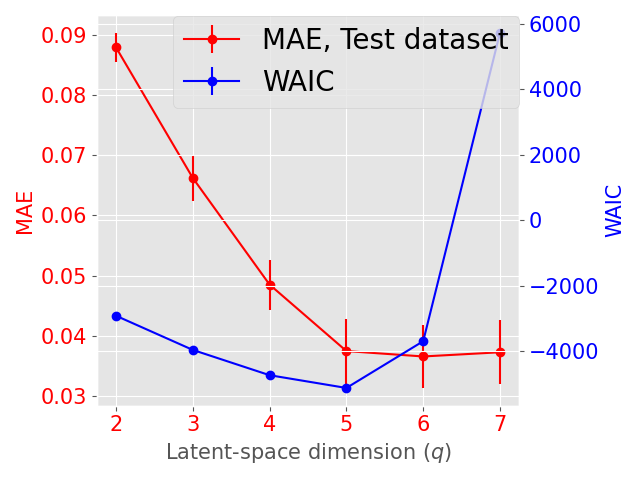}}
         \caption{SD}
     \end{subfigure}
     \hfill
     \begin{subfigure}[b]{0.4\textwidth}
         \centerline{\includegraphics[width=\textwidth]{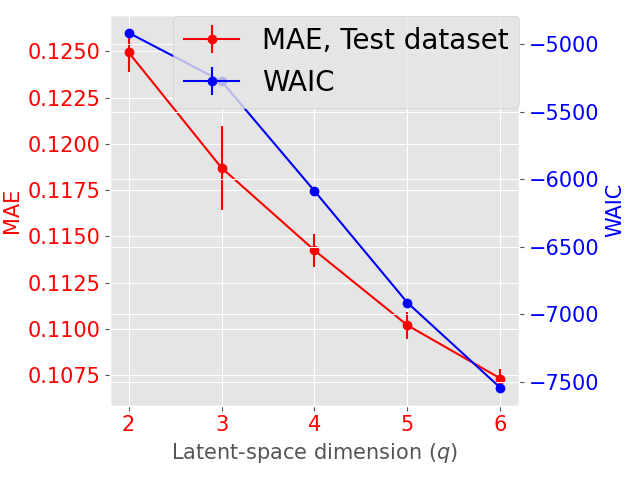}}
         \caption{ADNI}
     \end{subfigure}
\caption{WAIC score and MAE for (a) the SD dataset and (b) the ADNI dataset. In both figures, the left y-axis scaling describes the MAE while the right y-axis scaling corresponds to the WAIC score. }\label{Fig_WAIC_cen}
\end{figure}

\begin{figure}[ht]
\centering
\includegraphics[scale=0.4]{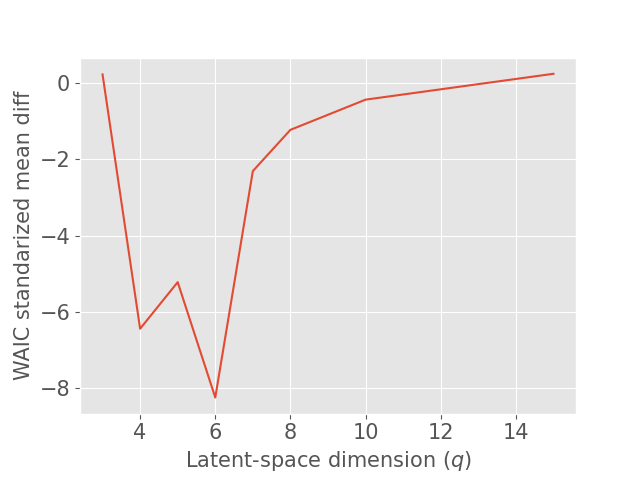}
\caption{Standardized mean differences between the WAIC score for ADNI data, computed as $(\textrm{mean}(\textrm{WAIC}_q)-\textrm{mean}(\textrm{WAIC}_{q-1}))/\sqrt{\textrm{var}(\textrm{WAIC}_q)/N_q-\textrm{var}(\textrm{WAIC}_{q-1})/N_{q-1}}$, $2<q\leq15$.   }\label{WAIC_std_diff}
\end{figure}

Concerning the SD dataset, the WAIC suggests $q=5$ latent dimensions (Figure \ref{Fig_WAIC_cen} (a)), hence demonstrating the ability of Fed-mv-PPCA to correctly recover the ground truth latent space dimension used to generate the data. Analogously, the MAE improves drastically up to the dimension $q=5$, and subsequently stabilizes. For ADNI, the MAE improves for increasing latent space dimensions, and we obtain the best WAIC score for $q=6$. 
In this case, one can notice that both the WAIC and MAE keep decreasing when considering $q$ varying from 1 to 6. 
In Figure \ref{WAIC_std_diff} we display the standardized mean differences of WAIC scores for $2<q\leq15$: increasing the latent dimension $q$ above 6 implies a mild relative improvement of the WAIC, while requiring a computationally more complex model and higher communication costs (see Section \ref{subsec:complexity}). This ultimately indicates that the choice of $q=6$ is a reasonable compromise for the ADNI database, allowing to efficiently capture most data variability, while remaining coherent with the model hypotheses (\emph{cf} $q<\min_k{d_k}$). Additionally, we should stress that when the $k$-th view dimension is smaller or equal to the latent dimension ($d_k\leq q$ for some $k$), we assumed that only the first $d_k-1$ columns of $W_c^{(k)}$ were effectively contributing for the latent projection of view $k$, and we forced the remaining columns of $W_c^{(k)}$ to be filled of zeros. For completeness, Supplementary Figure \ref{fig_waic_q_15} provides the evolution of WAIC for $q>6$ and shows that a latent dimension choice above $q=6$ is associated to a generally higher variance, suggesting less stable models and results.

It is worth noting that despite the agreement of MAE and WAIC for both datasets, the WAIC has the competitive advantage of providing a natural and automatic model selection measure in Bayesian models, which does not require testing data, conversely to MAE.

In the following experiments, we set the latent space dimension $q=5$ for the SD dataset and $q=6$ for the ADNI dataset. 

\subsubsection{Increasing heterogeneity across datasets}\label{sssec:heterogeneity}
 
\begin{table}[!t]
\caption{Results on ADNI dataset for all scenarios, and comparison with VAE and mc-VAE.} \label{ADNI_2}
\centering
\begin{tabular}{l|c|l|l|l|l}
\textbf{Scenario} & \textbf{Centers} & \textbf{Method} & \textbf{MAE Train} & \textbf{MAE Test} & \textbf{Accuracy in LS}\\
\hline 
\multirow{ 12}{*}{IID} & \multirow{3}{*}{\makecell{1 \\ (centralized \\ case)}} & \textbf{Fed-mv-PPCA} & \textbf{0.0805$\boldsymbol\pm$0.0003} & \textbf{0.1110$\boldsymbol\pm$0.0011} & 0.8680$\pm$0.0379\\
& & VAE & 0.1055$\pm$0.0017 & 0.1344$\pm$0.0019 & 0.8003$\pm$0.0409\\
& & mc-VAE & 0.1382$\pm$0.0009 & 0.1669$\pm$0.0020 & \textbf{0.8727$\boldsymbol\pm$0.0319}\\\cline{2-6}
 & \multirow{4}{*}{3} & \textbf{Fed-mv-PPCA} & \textbf{0.1027$\boldsymbol\pm$0.0015} & \textbf{0.1073$\boldsymbol\pm$0.0004} & 0.8652$\pm$0.0270\\
& & DP-Fed-mv-PPCA & 0.1304$\pm$0.0047 & 0.1304$\pm$0.0041 & 0.8321$\pm$0.0388\\
& & VAE & 0.1172$\pm$0.0022 & 0.1192$\pm$0.0015 & 0.8289$\pm$0.0383
\\
& & mc-VAE & 0.1602$\pm$0.0035 & 0.1567$\pm$0.0017 & \textbf{0.8850$\boldsymbol\pm$0.0262}\\\cline{2-6} 
& \multirow{4}{*}{6} & \textbf{Fed-mv-PPCA} & \textbf{0.1203$\boldsymbol\pm$0.0042} & \textbf{0.1074$\boldsymbol\pm$0.0007} & 0.8742$\pm$0.0267\\
& & DP-Fed-mv-PPCA & 0.1489$\pm$0.0051 & 0.1295$\pm$0.0029 & 0.8502$\pm$0.0347\\
& & VAE & 0.1357$\pm$0.0042 & 0.1191$\pm$0.0014 & 0.8224$\pm$0.0377\\
& & mc-VAE & 0.1840$\pm$0.0054 & 0.1563$\pm$0.0017 & \textbf{0.8894$\boldsymbol\pm$0.0230}\\
\hline
\multirow{ 6}{*}{G} & \multirow{3}{*}{3} & \textbf{Fed-mv-PPCA} & \textbf{0.1077$\boldsymbol\pm$0.0090} & \textbf{0.1096$\boldsymbol\pm$0.0011} & \textbf{0.8409$\pm$0.0293}\\
& & DP-Fed-mv-PPCA & 0.1362$\pm$0.0117 & 0.1340$\pm$0.0067 & 0.7977$\pm$0.0480\\
& & VAE & 0.1212$\pm$0.0077 & 0.1219$\pm$0.0015 & 0.7962$\pm$0.0440\\
& & mc-VAE & 0.1677$\pm$0.0156 & 0.1611$\pm$0.0025 & 0.8210$\pm$0.0464\\\cline{2-6}
& \multirow{ 3}{*}{6} & \textbf{Fed-mv-PPCA} & \textbf{0.1264$\boldsymbol\pm$0.0126} & \textbf{0.10912$\boldsymbol\pm$0.0011} & \textbf{0.8168$\boldsymbol\pm$0.0324}\\
& & DP-Fed-mv-PPCA & 0.1585$\pm$0.0158 & 0.1340$\pm$0.0065 & 0.7898$\pm$0.0407\\
& & VAE & 0.1401$\pm$0.0114 & 0.1202$\pm$0.0016 & 0.7882$\pm$0.0534\\
& & mc-VAE & 0.1924$\pm$0.0219 & 0.1589$\pm$0.0018 & 0.8085$\pm$0.0464\\
\hline\hline 
\multirow{ 4}{*}{K} & \multirow{2}{*}{3} & Fed-mv-PPCA & 0.0951$\pm$0.0086 & 0.1212$\pm$0.0109 & 0.8624$\pm$0.0303\\
& &  DP-Fed-mv-PPCA & 0.1208$\pm$0.0081 & 0.1462$\pm$0.0092 & 0.8357$\pm$0.0329\\\cline{2-6}
& \multirow{2}{*}{6} & Fed-mv-PPCA & 0.1107$\pm$0.0106 & 0.1293$\pm$0.0162 & 0.8720$\pm$0.0308\\
& &  DP-Fed-mv-PPCA & 0.1434$\pm$0.0099 & 0.1604$\pm$0.0164 & 0.8515$\pm$0.0375\\
\hline 
\multirow{ 4}{*}{G/K} & \multirow{2}{*}{3} & Fed-mv-PPCA & 0.0995$\pm$0.0029 & 0.1271$\pm$0.0087 & 0.7338$\pm$0.0308\\
& & DP-Fed-mv-PPCA & 0.1287$\pm$0.0081 & 0.1547$\pm$0.0125 & 0.7164$\pm$0.0474\\\cline{2-6}
& \multirow{2}{*}{6} & Fed-mv-PPCA & 0.1173$\pm$0.0061 & 0.1268$\pm$0.0088 & 0.7469$\pm$0.0202\\
& & DP-Fed-mv-PPCA & 0.1463$\pm$0.0088 & 0.1523$\pm$0.0104 & 0.7174$\pm$0.0387
\end{tabular}
\end{table}
To test the robustness of Fed-mv-PPCA's results, 
for each scenario of Table~\ref{tab:scenarios}, we perform 10 times 3-CV to obtain train and test datasets, hence we split the train dataset across $C$ centers. 
We compare our method to VAE and mc-VAE, using the same partition of train and test datasets for CV.  For all methods we consider the MAE in both the train and test datasets, as well as the accuracy score in the Latent Space (LS) discriminating the groups (synthetically defined in SD or corresponding to the clinical diagnosis in ADNI). The classification was performed via Linear Discriminant Analysis (LDA) on the individual projection of test data in the latent space. 

In what follows we present a detailed description of results corresponding to the ADNI dataset. Results for the SD dataset are in line with what we observe for ADNI (see Supplementary Table \ref{SD_2} in Appendix B), and confirm that our method outperforms both VAE and mc-VAE in reconstruction in all scenarios. In addition, Fed-mv-PPCA outperforms in discrimination both methods in the non-iid setting, while mc-VAE shows slightly improved discriminating ability in the IID scenario.

Moreover, for the sake of completeness, supplementary Table \ref{ADNI_fedprox} and supplementary Figure \ref{fig_adni_fedprox} provide results for both VAE and mc-VAE with two layers, as well as both methods with one layer and using FedProx as robust aggregation scheme with the proximal term $\lambda$ varying from 0.01 to 0.5: this method aims at improving convergence in case of heterogeneous data distributions. No significant improvement as been observed comparing to the FedAvg scheme for the considered datasets and settings, while non linear models are associated with a negligible improvement in testing compared to the linear variational autoencoders.

\paragraph{IID distribution.}

We consider the IID scenario and split the train dataset across 1 to 6 centers. 
Table \ref{ADNI_2} shows that results from Fed-mv-PPCA are stable when moving from a centralized to a federated setting, and when considering an increasing number of centers $C$. 
 We only observe a degradation of the MAE in the train dataset, but this does not affect the performance of Fed-mv-PPCA in reconstructing the test data. Moreover, irrespectively from the number of training centers, Fed-mv-PPCA outperforms VAE and mc-VAE in reconstruction. 

\paragraph{Heterogeneous distribution.}

\begin{figure}[ht]
\centering
     \begin{subfigure}[b]{1.1\textwidth}
         \centerline{\includegraphics[width=1.1\textwidth]{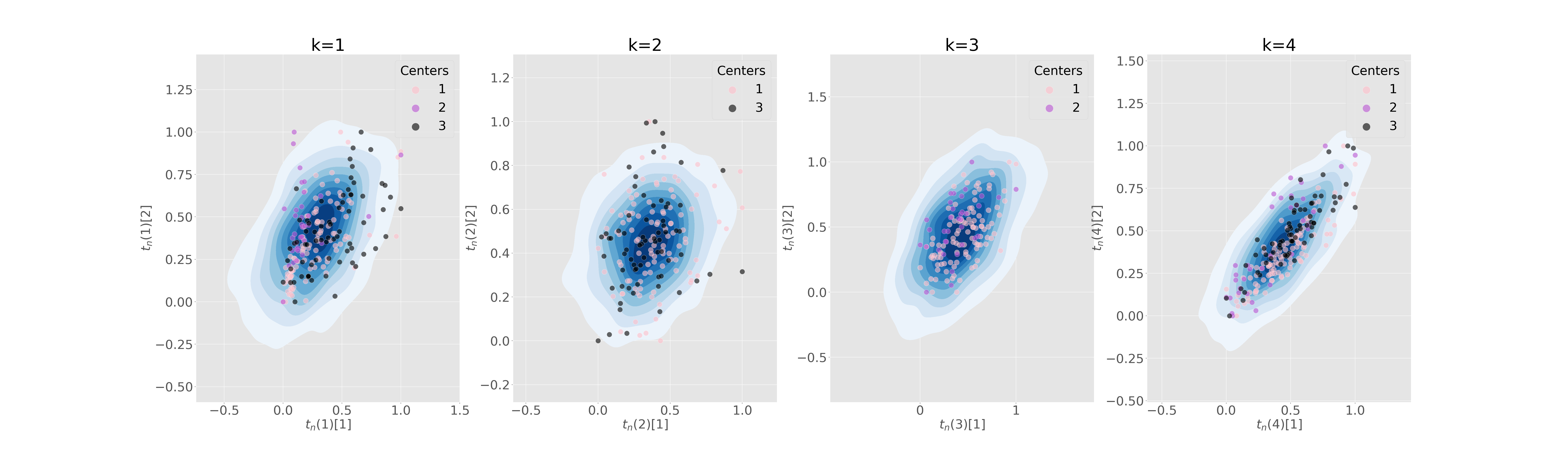}}
         \caption{Original space}
     \end{subfigure}
     \hfill
     \begin{subfigure}[b]{0.4\textwidth}
         \centerline{\includegraphics[width=.86\textwidth]{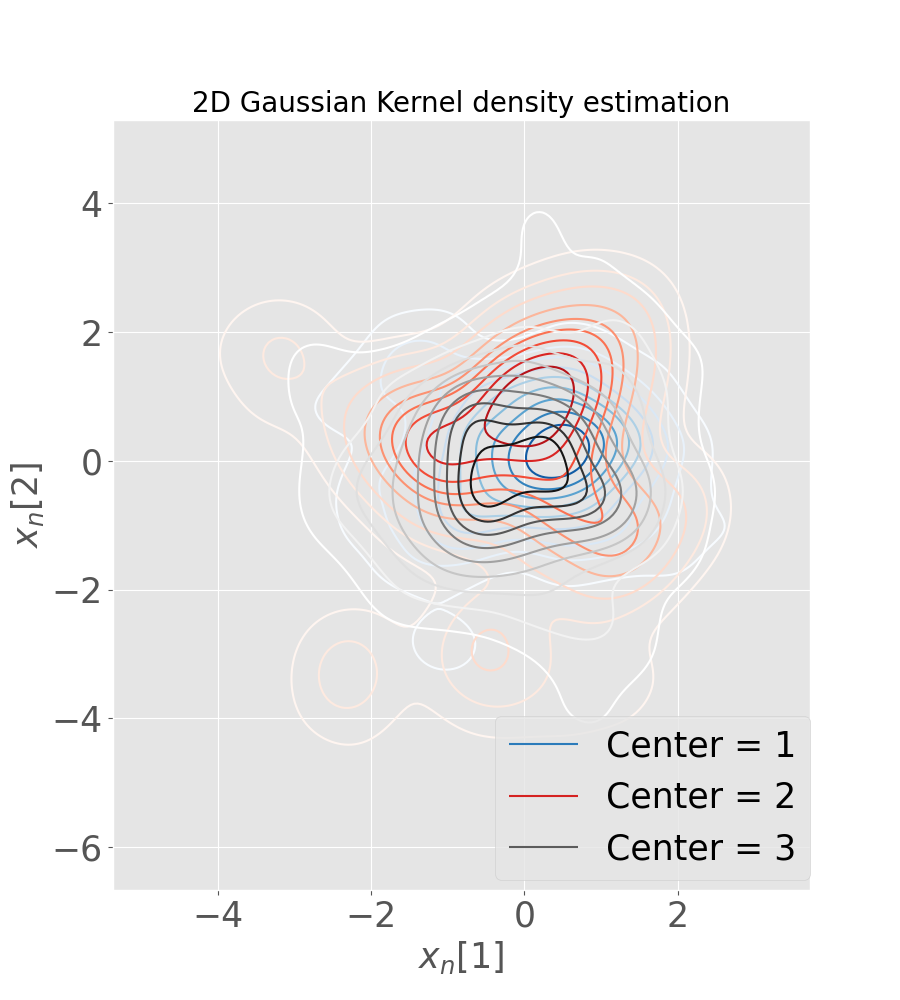}}
         \caption{Latent space}
    \end{subfigure}
     \hfill
     \begin{subfigure}[b]{0.5\textwidth}
         \includegraphics[width=1\linewidth]{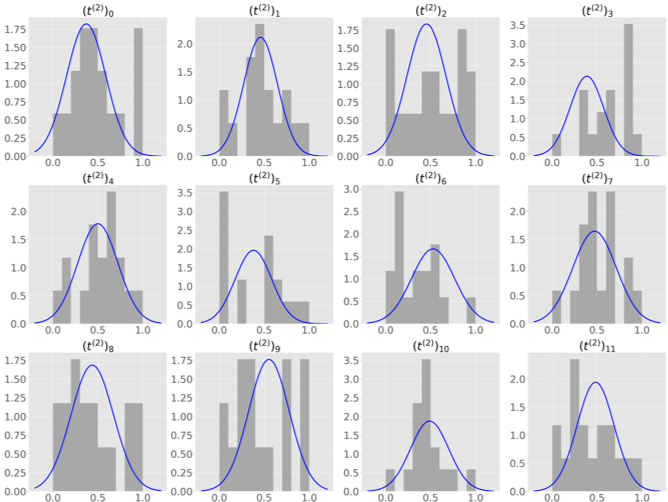}
         \caption{Missing views imputation}
     \end{subfigure}
\caption{G/K scenario. First two dimensions for (a) sampling from posterior distribution of latent variables $\mathbf{x}_{c,n}$, and (b) predicted distribution $\mathbf{t}_{c,n}^{(k)}$ against real data. (c) Predicted testing distribution (blue curve) of sample features of the missing MRI view against real data (histogram). }\label{Variability}
\end{figure}

We simulate an increasing degree of heterogeneity in 3 to 6 local datasets, to further challenge the models in properly recovering the global data. 
In particular, we consider both a non-iid distribution of subjects across centers, and missing views in some local dataset. 
It is worth noting that scenarios implying datasets with missing views cannot be handled by VAE nor by mc-VAE, hence in these cases we reported only results obtained with our method. 

In Table~\ref{ADNI_2} we report the average MAEs and Accuracy in the latent space for each scenario, obtained over 10 tests for the ADNI dataset. Fed-mv-PPCA is robust despite an increasing degree of heterogeneity in the local datasests.  We observe a slight deterioration of the MAE in the test dataset in the more challenging non-iid cases (scenarios K and G/K), while we note a drop of the classification accuracy in the most heterogeneous setup (G/K). Nevertheless, Fed-mv-PPCA demostrates to be more stable and to perform better than VAE and mc-VAE when statistical heterogeneity is introduced.

Figure~\ref{Variability} (a) shows the sampling posterior distribution of the latent variables, while in Figure~\ref{Variability} (b) we plot the predicted global distribution 
 against observations, for the G/K scenario and considering 3 training centers. We notice that the variability  of centers is well captured, in spite of the heterogeneity of the distribution in the latent space. In particular center 2 and center 3 have two clearly distinct means: this is due to the fact that subjects in these centers belong to two distinct groups (AD in center 2 and NL in center 3). 
Despite this, Fed-mv-PPCA is able to reconstruct 
correctly all views, even if 2 views are completely missing in some local datasets (MRI is missing in center 2 and FDG in center 3).

After convergence of Fed-mv-PPCA, each center is supplied with global distributions for each parameter: data corresponding to each view can therefore be simulated, 
even if some are missing in the local dataset. Considering the same si\-mu\-la\-tion in the challenging G/K scenario, 
in Figure~\ref{Variability} (c) we plot the global distribution of some randomly selected features of a missing imaging view in the test center, against ground truth density histogram, from the original data. The global distribution provides an accurate description of the missing MRI view. Supplementary Figure \ref{Prediction_ADNI_supp} shows imputation for all features of the missing MRI and FDG views.

\subsubsection{Differentially private Fed-mv-PPCA}\label{sec:dp_res}

We repeated all experiments described in Section \ref{sssec:heterogeneity}, using Fed-mv-PPCA with differential privacy. 
For each parameter $\theta_c\in\boldsymbol{\theta}_c$ we set $\varepsilon=10$, $\delta=0.01$, except when stated otherwise. Finally, to perform difference clipping (Algorithm \ref{algo2}), we set the maximal $l_p$ norm of the difference between the updated parameter at round $r$ and the prior, $\overline{\Delta\boldsymbol{\theta}}_c[r]$, to be $\sigma_{\widetilde{\boldsymbol{\theta}}}[r-1]$.

\paragraph{DP parameters utility.}

We tested the utility of global parameters obtained with the differentially private Algorithm \ref{algo2}, to appreciate if data reconstruction and accuracy in the latent space are well preserved when the perturbation is performed at the client level (see Table \ref{ADNI_2}, DP-Fed-mv-PPCA rows). 
As expected, we observe a deterioration of previous results, which increases with the number of training centers, due to the communication of a larger number of perturbed parameters. Nevertheless, results remain still coherent, and illustrate the utility of the differentially private global parameters. 
In particular Figure \ref{eps_clip} shows how $\varepsilon$ and the multiplicative constant used for difference clipping affect the ability of the optimzed DP global parameters in preserving a meaningful separation of subjects by diagnosis in the test set. For instance, we note that when $(\varepsilon,\delta)$ are fixed to $(1,0.01)$, the clipping constant should be at most 0.2 to preserve a reasonable utility of the model outputs, in comparison to the one obtained using Fed-mv-PPCA (reported in Figure \ref{eps_clip}, \emph{not DP} column). This further stresses the need of carefully tuning these DP parameters to ensure a good balance between privacy and utility.

\begin{figure}[ht]
     \begin{subfigure}[b]{0.4\textwidth}
         \centerline{\includegraphics[width=\textwidth]{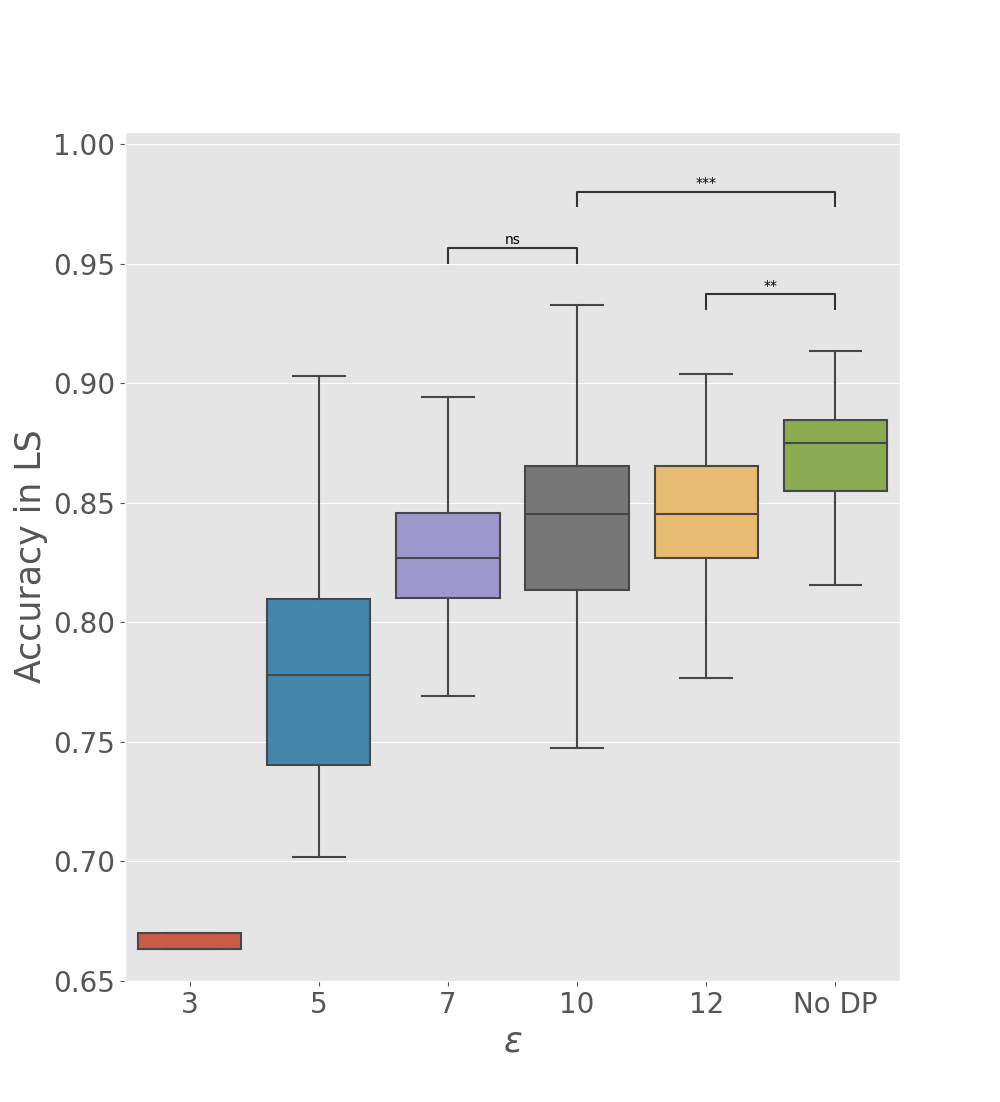}}
         \caption{$\delta=0.01$, clipping constant$=1$}
     \end{subfigure}
     \hfill
     \begin{subfigure}[b]{0.4\textwidth}
         \centerline{\includegraphics[width=\textwidth]{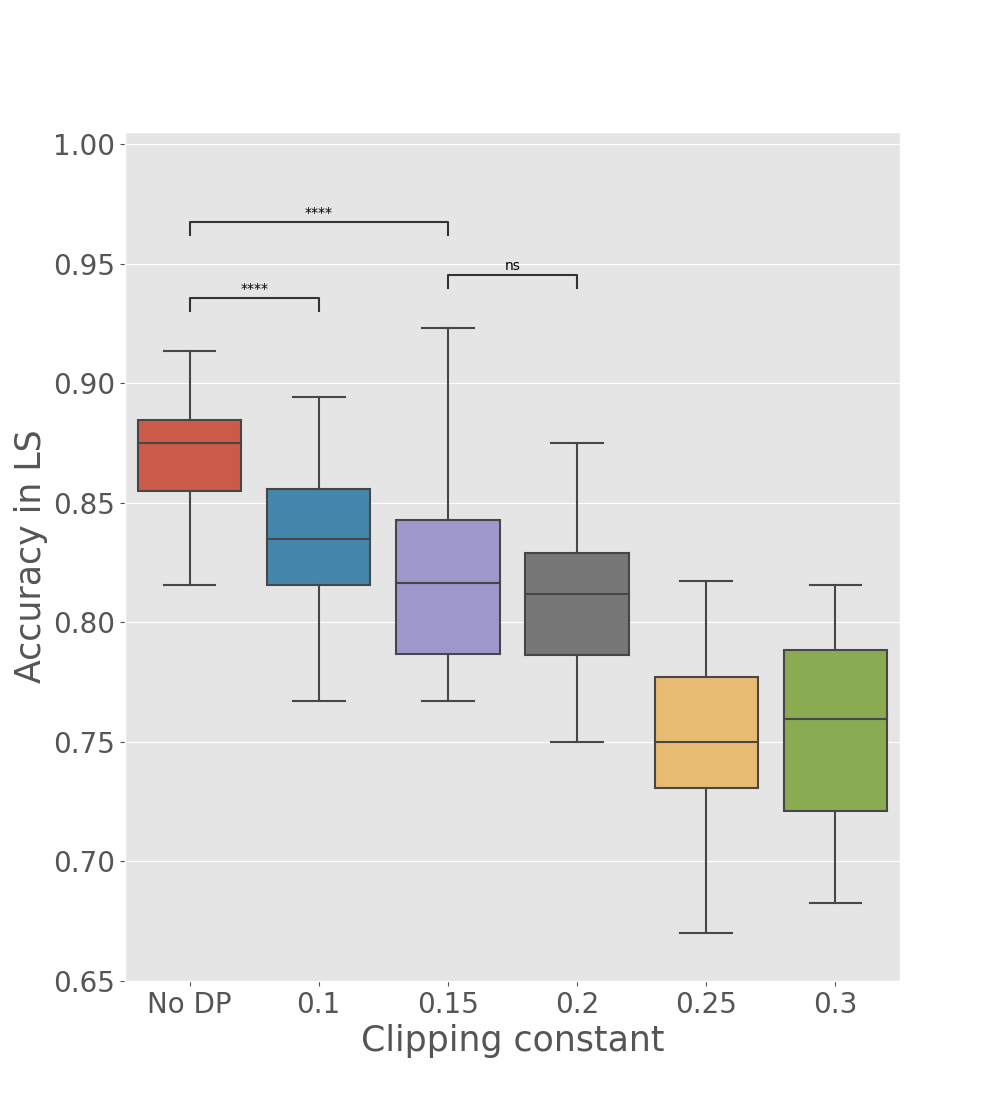}}
         \caption{$\varepsilon=1, \delta=0.01$}
     \end{subfigure}
\caption{DP-Fed-mv-PPCA performance in preserving subjects separation in the Latent Space (LS) by diagnosis, varying (a) $\varepsilon$ and (b) the multiplicative constant for difference clipping. Results for Fed-mv-PPCA (without DP) are reported for comparison purposes. $\ast\ast:=p\leq1.e-2$, $\ast\ast\ast:=p\leq1.e-3$, $\ast\ast\ast\ast:=p\leq1.e-4$.
}\label{eps_clip}
\end{figure}

\paragraph{Evolution of the standard deviation of global parameters and convergence.}

To better understand the effect of performing difference clipping with respect to the priors, in Figure \ref{sigma_tilde_theta} (a-b) we plot the median evolution of the estimated standard deviation for each global parameter during training in the G/K scenario, comparing Fed-mv-PPCA and DP-Fed-mv-PPCA. When DP is not introduced, we can see that all global parameters' standard deviations converge, as expected, indicating harmonization of local parameters during training. In particular, it is worth noticing that the clinical view displays higher variability for both intercept and noise parameters. Indeed, the clinical view is the most discriminant one between healthy and Alzheimer patients, and results plotted in Figure \ref{sigma_tilde_theta} are obtained under a non-iid scenario. Furthermore, one can notice the low magnitude of the standard deviations for $\widetilde{\mu}^{(FDG)}$ and $\widetilde{\sigma}^{(MRI)}$: this may be explained by the fact that there are less centers contributing to the estimation of both FDG- and MRI-specific parameters, since in the G/K scenario the FDG and MRI views are missing in some centers. On the other hand, when differential privacy is introduced we tend to loose information concerning variability of global parameters: in this case all standard deviations drop towards 0 after approximately 20 communication rounds, meaning that the final global parameters distributions are strongly concentrated around their mean. 

\begin{figure}[ht]
\centering
     \begin{subfigure}[b]{0.55\textwidth}
         \centerline{\includegraphics[width=\textwidth]{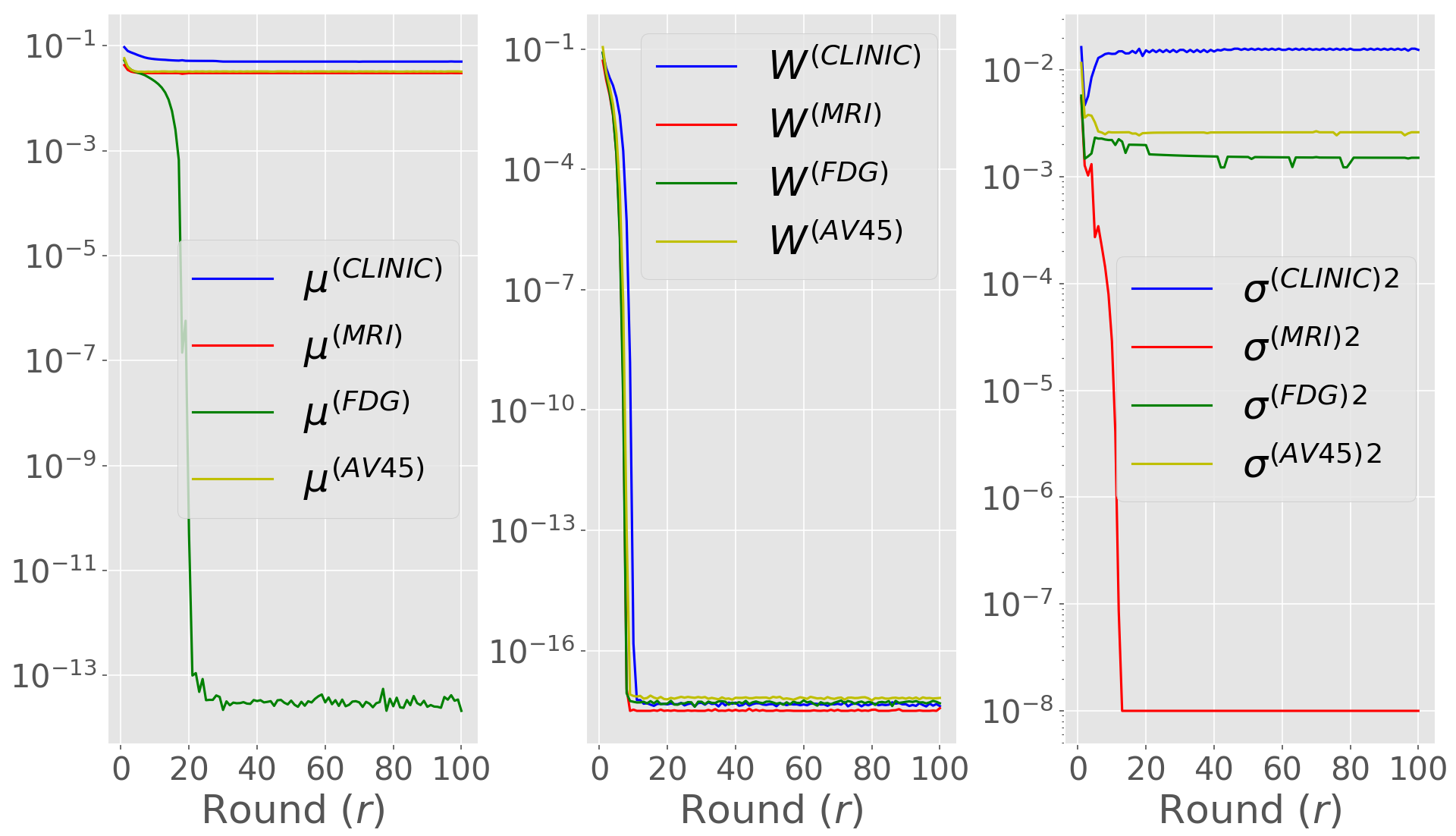}}
         \caption{Fed-mv-PPCA}
     \end{subfigure}
     \hfill
     \begin{subfigure}[b]{0.55\textwidth}
         \centerline{\includegraphics[width=\textwidth]{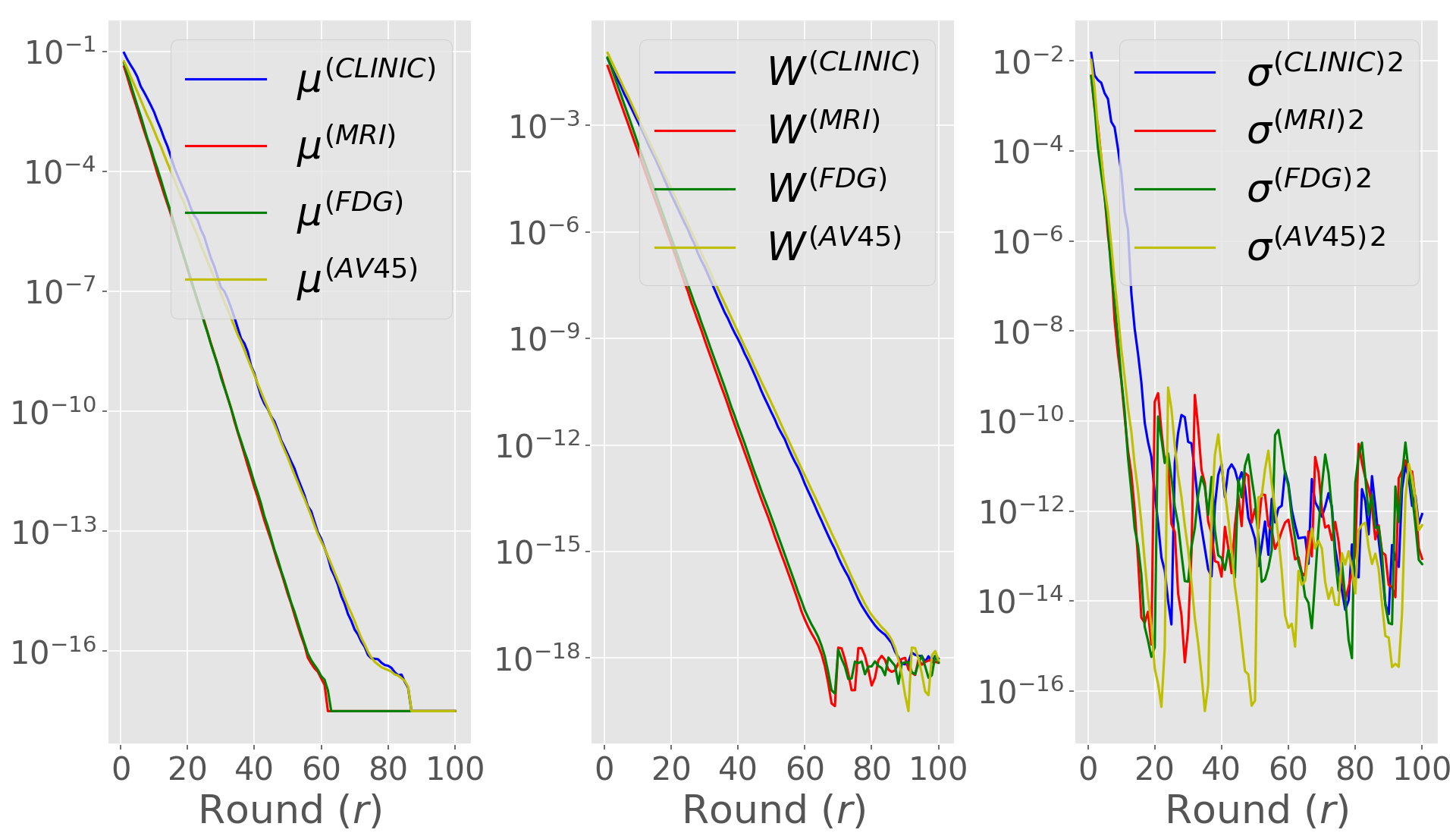}}
         \caption{DP-Fed-mv-PPCA}
     \end{subfigure}
     \hfill
     \begin{subfigure}[b]{0.4\textwidth}
         \centerline{\includegraphics[width=\textwidth]{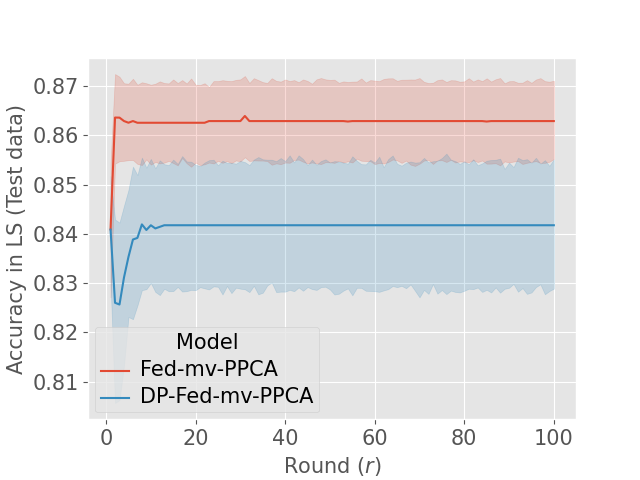}}
         \caption{Comparison of accuracy metric}
     \end{subfigure}
\caption{Evolution of the standard deviation (in $\log_{10}$ scale) of all global parameters for the GK scenario using 3 centers: comparison between (a) Fed-mv-PPCA and (b) DP-Fed-mv-PPCA. (c) Accuracy in the latent space across round (mean and std over 10 3-CV tests performed in the IID case), comparing Fed-mv-PPCA and DP-Fed-mv-PPCA.
}\label{sigma_tilde_theta}
\end{figure}

Finally, we empirically investigate the convergence of DP-Fed-mv-PPCA. The convergence of the EM algorithm for PPCA has already been commented by \cite{tipping1999probabilistic}. Ne\-ver\-the\-less, in the case of DP-Fed-mv-PPCA, local parameters updates are performed using priors estimated at the master level from perturbed previous local updates. In addition, as commented above, the standard deviations of global parameters used as priors tend to decrease rapidly due to the clipping mechanism. Consequently, priors provided to the centers will be increasingly informative, affecting the algorithm convergence. Figure \ref{sigma_tilde_theta} (c) shows the mean evolution of the accuracy in the latent space (and for the test dataset) during successive rounds of both Fed-mv-PPCA and DP-Fed-mv-PPCA: mean and standard deviation are obtained by repeating 10 times a 3-CV test. 
Although the convergence of the algorithm seems to be reached in both cases, DP-Fed-mv-PPCA optimized parameters are clearly sub-optimal. 
Moreover, we notice a higher variability of the accuracy metric, as a consequence of the random perturbation performed over local parameters, which in turns affects the priors. Further insights are provided in Supp. Figure \ref{Convergence_global_supp}, showing the estimated global variance of the Gaussian noise, which is greater when using DP-Fed-mv-PPCA compared to Fed-mv-PPCA, indicating an estimated higher variability in the global dataset (\emph{i.e.} the ensemble of the local datasets). This is an expected consequence of the perturbation mechanism, which necessary affects the global model's performance.

\section{Conclusions 
}\label{Sec5}

In spite of the large amount of currently available multi-site biomedical data,  we still lack of reliable analysis methods to be applied in  multi-centric applications in compliance with privacy. To tackle this challenge, Fed-mv-PPCA proposes a hierarchical generative model to perform data assimilation of federated heterogeneous multi-views data. The Bayesian approach allows to naturally handle statistical heterogeneity across centers and missing views in local datasets, to provide an interpretable model of data variability and a valuable tool for missing data imputation. We show that  Fed-mv-PPCA can be further coupled with differential privacy. Compatibly with our Bayesian formulation, we provide formal privacy guarantees of the proposed federated learning scheme against potential private information leakage from the shared statistics. 

Our applications demonstrate that Fed-mv-PPCA is robust with respect to an increasing degree of heterogeneity across training centers, and provides high-quality data reconstruction, outperforming competitive methods in all scenarios. Moreover, when differential privacy is introduced, we provide an investigation of the method's performance according to different privacy budget scenarios.  It is worth noting that three DP hyperparameters play a key role, and could affect the performance of DP-Fed-mv-PPCA: the privacy budget parameters $(\epsilon, \delta)$, and the clipping constant multiplying $\sigma_{\widetilde{\boldsymbol{\theta}}}$. These parameters are tightly related and all contribute to determine the magnitude of the noise used for perturbing the updated difference $\overline{\Delta\boldsymbol{\theta}}$. Indeed, increasing either $\varepsilon$ or $\delta$, or reducing the multiplicative constant in the clipping mechanism, implies the addition of a smaller noise, hence the improvement of the overall utility of the global model. Nevertheless, smaller $\varepsilon$ and $\delta$ corresponds to higher  privacy guarantees.

Further extensions of this work are possible in several directions. The computational efficiency of Fed-mv-PPCA and its scalability to large datasets can be improved by leveraging on data sparsity and optimizing matrix multiplications and norm calculations as showed by \cite{elgamal2015spca}. In addition, introducing sparsity on the reconstruction weights is also expected to improve the robustness of the approach to non-informative dimensions and modalities. Another interesting research direction concerns the handling of missing data. Indeed, in this paper we considered Missing At Random (MAR) views in local datasets due to heterogeneous pipelines~\citep{rubin1976inference}.  
Fed-mv-PPCA could be extended to take into account and impute Missing Not At Random (MNAR) data as well, covering for instance the case of missing data due to self censoring, of interest in the biomedical context.

In this work we adopted DP to increase our framework's security, motivated by the need to derive explicit theoretical privacy guarantees for our model. 
Alternatively, some recent works propose to improve data privacy (and eventually model utility) in a federated setting by generating fake data through generative adversarial networks \citep{rajotte2021reducing,rasouli2020fedgan}. 
Despite 
 formal privacy guarantees cannot be provided by data augmentation methods, 
  their comparison to DP is a problem of great interest and should be further investigated.

In addition, we provided an experimental analysis of the convergence properties of DP in the proposed setting. In the future, formal convergence guarantees could be investigated, for example for the general optimization setting associating DP to EM. Furthermore, adaptive clipping strategies \citep{andrew2019differentially} could be investigated and employed to improve the convergence of DP-Fed-mv-PPCA and the final utility of global parameters. Finally, in order to improve the robustness of DP-Fed-mv-PPCA, non-Gaussian data likelihood and priors could be introduced in the future, to better account for heavy-tailed distributions defined by outliers datasets and centers.


\acks{This work received financial support by the French government, through the 3IA C\^ote d'Azur  Investments  in  the  Future  project  managed  by  the  National  Research Agency (ANR) with the reference number ANR-19-P3IA-0002, and by the ANR JCJC project Fed-BioMed, ref. num. 19-CE45-0006-01. The authors are grateful to the OPAL infrastructure from Universit\'e C\^ote d'Azur for providing resources and support.\\

Data   collection   and   sharing  for  this   project   was  funded   by  the   Alzheimer's   Disease Neuroimaging  Initiative  (ADNI)  (National  Institutes  of  Health  Grant  U01  AG024904)  and DOD  ADNI  (Department  of  Defense  award  number  W81XWH-12-2-0012).  ADNI  is  funded by  the   National  Institute  on  Aging,  the  National  Institute  of  Biomedical  Imaging  and Bioengineering, and through generous contributions from the following: AbbVie, Alzheimer's Association;  Alzheimer's  Drug  Discovery  Foundation;  Araclon  Biotech;  BioClinica,  Inc.; Biogen;   Bristol-Myers   Squibb   Company;   CereSpir,   Inc.;   Cogstate;   Eisai   Inc.;   ElanPharmaceuticals,  Inc.;  Eli  Lilly  and  Company;  EuroImmun;  F.  Hoffmann-La  Roche  Ltd and its  affiliated  company  Genentech,  Inc.;  Fujirebio;  GE  Healthcare;  IXICO  Ltd.;  Janssen Alzheimer    Immunotherapy    Research    $\&$    Development,    LLC.;    Johnson    $\&$    Johnson Pharmaceutical  Research  $\&$  Development  LLC.;  Lumosity;  Lundbeck;  Merck  $\&$  Co.,  Inc.; Meso  Scale  Diagnostics,  LLC.;  NeuroRx  Research;  Neurotrack  Technologies;  Novartis Pharmaceuticals Corporation; Pfizer Inc.; Piramal Imaging; Servier; Takeda Pharmaceutical Company;  andTransition  Therapeutics.  The  Canadian  Institutes  of  Health  Research  is providing  funds  to  support  ADNI  clinical  sites  in  Canada.  Private  sector  contributions  are facilitated by the Foundation for the National Institutes of Health (\url{www.fnih.org}). The grantee organization is the Northern California Institute for Research and Education, and the study is coordinated by the Alzheimer's Therapeutic Research Institute at the University of Southern California.  ADNI  data  are  disseminated   by  the   Laboratory  for  NeuroImaging  at  the University of SouthernCalifornia.}

%
\ethics{The work follows appropriate ethical standards in conducting research and writing the manuscript, following all applicable laws and regulations regarding treatment of animals or human subjects.}

\coi{The authors declare that they have no conflict of interests.
}

\bibliography{biblio}

\newpage
\appendix 

\section*{Appendix A. Theoretical derivation of Fed-mv-PPC method}\label{AppendixA}


\subsection*{Problem setting}

We consider $C$ centers, each center $c\in\{1,\dots,C\}$ providing data from $N_{c}$ subjects, each consisting of $K_{c}\leq K$ views. Let $d_k$ be the dimension of data corresponding to the $k^{\textrm{th}}$-view, and $d:=\sum_{k=1}^K d_k$. \\

For each $k, c$ and each $n\in\{1,\dots,N_{c}\}$, the generative model is:

\begin{equation}\label{tcnkg_supp}
    \mathbf{t}_{c,n}^{(k)}=W_{c}^{(k)}\mathbf{x}_{c,n}+\boldsymbol{\mu}_{c}^{(k)}+\boldsymbol{\varepsilon}_{c}^{(k)},
\end{equation}
where:
\begin{itemize}
    \item $\mathbf{t}_{c,n}^{(k)}\in\mathbb{R}^{d_k}$ denotes the raw data of the $k^{\textrm{th}}$-view of the sample indexed by $n$ in center $c$, which belongs to group $g$.
    \item $\mathbf{x}_{c,n}\sim\mathcal{N}(0,\mathbb{I}_q)$ is a $q$-dimensional latent variable, $q\leq\min_k(d_k)$ being a suitable user-defined latent-space dimension.
    \item $W_{c}^{(k)}\in\mathbb{R}^{d_k\times q}$ provides the linear mapping between the two sets of variables for the $k^{\textrm{th}}$-view.
    \item $\boldsymbol{\mu}_{c}^{(k)}\in\mathbb{R}^{d_k}$ allows data corresponding to view $k$ to have a non-zero mean.
    \item $\boldsymbol{\varepsilon}_{c}^{(k)}\sim\mathcal{N}\left(0,{\sigma_{c}^{(k)}}^2\mathbb{I}_{d_k}\right)$ is a Gaussian noise for the $k^{\textrm{th}}$-view.
\end{itemize}

A compact formulation for $\mathbf{t}_{c,n}$ (\emph{i.e.} considering all views concatenated) can be easily derived from Equation \eqref{tcnkg_supp}:
\begin{equation}\label{tcng_supp}
    \mathbf{t}_{c,n}=W_{c}\mathbf{x}_{c,n}+\boldsymbol{\mu}_{c}+\boldsymbol{\varepsilon}_{c},
\end{equation}
where:
\begin{itemize}
    \item $\mathbf{t}_{c,n}=\left[{\mathbf{t}_{c,n}^{(1)}}^T,\dots,{\mathbf{t}_{c,n}^{(K)}}^T\right]^T\in\mathbb{R}^d$
    \item $W_{c}=\left[{W_{c}^{(1)}}^T,\dots,{W_{c}^{(K)}}^T\right]^T\in\mathbb{R}^{d\times q}$
    \item $\boldsymbol{\mu}_{c}=\left[{\boldsymbol{\mu}_{c}^{(1)}}^T,\dots,{\boldsymbol{\mu}_{c}^{(K)}}^T\right]^T\in\mathbb{R}^{d}$
    \item $\boldsymbol{\varepsilon}_{c}=\left[{\boldsymbol{\varepsilon}_{c}^{(1)}}^T,\dots,{\boldsymbol{\varepsilon}_{c}^{(K)}}^T\right]^T\sim\mathcal{N}(0,\Psi_{c})$,\\ where $\Psi_{c}$ is a diagonal block-matrix, $\Psi_{c}=diag\left({\sigma_{c}^{(1)}}^2\mathbb{I}_{d_1},\dots,{\sigma_{c}^{(K)}}^2\mathbb{I}_{d_K}\right)$
\end{itemize}
Note that for the sake of simplicity we represented all $K$ views. If in center $c$ the $k^{\textrm{th}}$-view is missing, than it will be simply removed, \emph{e.g.} one would have:\\ $\mathbf{t}_{c,n}=\left[{\mathbf{t}_{c,n}^{(1)}}^T,\dots,{\mathbf{t}_{c,n}^{(k-1)}}^T,{\mathbf{t}_{c,n}^{(k+1)}}^T,\dots,{\mathbf{t}_{c,n}^{(K)}}^T\right]^T$, $\mathbf{t}_{c,n}\in\mathbb{R}^{d-d_k}$.\\

For each center $c$ and each $k$ we want to estimate $\boldsymbol{\theta}_{c}:=\left\{\boldsymbol{\mu}_{c}^{(k)},W_{c}^{(k)},{\sigma_{c}^{(k)}}^2\right\}_{k=1,\dots,K_c}$ assuming that all local parameters are a realization of a common global distribution, to be estimated as well. The latter, provide a global model, which should be able to describe data across all centers.

\subsection*{Parameter $\boldsymbol{\mu}$}

We assume that $\forall c, k$:
\begin{equation}\label{dist_mukc}
\boldsymbol{\mu}_{c}^{(k)}|\widetilde{\boldsymbol{\mu}}^{(k)},\sigma_{\widetilde{\boldsymbol{\mu}}^{(k)}}^2\sim\mathcal{N}\left(\widetilde{\boldsymbol{\mu}}^{(k)},\sigma_{\widetilde{\boldsymbol{\mu}}^{(k)}}^2\mathbb{I}_{d_k}\right)
\end{equation}

\textbf{Step 1. \textit{(In each center)}:} Estimate $\boldsymbol{\mu}_{c}^{(k)}[s+1]$ given $\left(\widetilde{\boldsymbol{\mu}}^{(k)}, \sigma_{\widetilde{\boldsymbol{\mu}}^{(k)}}^2\right)[s]$ (iteration $s$ is denoted by $[s]$).\\

From Equation \eqref{tcnkg_supp}, the marginal distribution of $\mathbf{t}_{c,n}^{(k),g}$ is:
\begin{equation*}
\mathbf{t}_{c,n}^{(k)}\sim\mathcal{N}(\boldsymbol{\mu}_{c}^{(k)},C_{c}^{(k)}),
\end{equation*}
where $C_{c}^{(k)}=W_{c}^{(k)}{W_{c}^{(k)}}^T+{\sigma_{c}^{(k)}}^2\mathbb{I}_{d_k}$, $C_{c}^{(k)}\in\mathbb{R}^{d_k\times d_k}$. 

The corresponding log-likelihood gives:
\begin{eqnarray}
    \mathcal{L}_{c}^{(k)} & = &  -\frac{1}{2}\left\{N_{c}d_k\ln{(2\pi)}+N_{c}\ln{|C_{c}^{(k)}|}+\sum_{n=1}^{N_{c}}\left(\mathbf{t}_{c,n}^{(k)}-\boldsymbol{\mu}_{c}^{(k)}\right)^T\left(C_{c}^{(k)}\right)^{-1}\left(\mathbf{t}_{c,n}^{(k)}-\boldsymbol{\mu}_{c}^{(k)}\right)\right\}\nonumber \\
\end{eqnarray}
Therefore, for each center $c$ and for all $k\in\{1,\dots,K\}$, the following optimization problem should be considered:
\begin{equation*}
\max_{\boldsymbol{\mu}_{c}^{(k)}}\mathcal{L}_{c}^{(k)}+\ln{p\left(\boldsymbol{\mu}_{c}^{(k)}\right)},
\end{equation*}
where: 
\begin{equation*}
\ln{p\left(\boldsymbol{\mu}_{c}^{(k)}\right)}=-\frac{1}{2\sigma_{\widetilde{\boldsymbol{\mu}}^{(k)}}^2}\left(\boldsymbol{\mu}_{c}^{(k)}-\widetilde{\boldsymbol{\mu}}^{(k)}\right)^T\left(\boldsymbol{\mu}_{c}^{(k)}-\widetilde{\boldsymbol{\mu}}^{(k)}\right)+const,
\end{equation*}
where $const$ collects terms which are independents from $\boldsymbol{\mu}_{c}^{(k)}$.
We obtain:
\begin{eqnarray}\label{muck[s]}
    \mu_{c}^{(k)}[s+1] & = & \left[N_{c}\mathbb{I}_{d_k}+\frac{1}{\sigma_{\widetilde{\boldsymbol{\mu}}^{(k)}}^2[s]}C_{c}^{(k)}\right]^{-1}\left[\sum_{n=1}^{N_{c}}\mathbf{t}_{c,n}^{(k)}+\frac{1}{\sigma_{\widetilde{\boldsymbol{\mu}}^{(k)}}^2[s]}C_{c}^{(k)}\widetilde{\boldsymbol{\mu}}^{(k)}[s]\right] \nonumber \\
\end{eqnarray}

\textbf{Step 2. \textit{(In the master)}:} Estimate $\left(\widetilde{\boldsymbol{\mu}}^{(k)}[s+1], \sigma_{\widetilde{\boldsymbol{\mu}}^{(k)}}^2\right)[s+1]$ given $\boldsymbol{\mu}_{c}^{(k)}[s+1]$ for all $c$. \\

Using \eqref{dist_mukc}, we obtain the following log-likelihood:
\begin{equation}\label{log_like_tildemu}
    \mathcal{L}=\sum_{c=1}^C\ln{p(\boldsymbol{\mu}_{c}^{(k)})}=\sum_{c=1}^C\left\{const-\frac{1}{\sigma_{\widetilde{\boldsymbol{\mu}}^{(k)}}^2}\|\boldsymbol{\mu}_{c}^{(k)}-\widetilde{\boldsymbol{\mu}}^{(k)}\|^2\right\}
\end{equation}
By imposing $\partial_{\left(\widetilde{\boldsymbol{\mu}}^{(k)},\sigma_{\widetilde{\boldsymbol{\mu}}^{(k)}}^2\right)}\left(\eqref{log_like_tildemu}\right)=0$ we obtain:
\begin{equation}
    \widetilde{\boldsymbol{\mu}}^{(k)}[s+1]=\frac{1}{C}\sum_{c=1}^C\boldsymbol{\mu}_{c}^{(k)}[s+1]
\end{equation}
and
\begin{equation}
    \sigma_{\widetilde{\boldsymbol{\mu}}^{(k)}}^2[s+1]=\frac{1}{Cd_k}\sum_{c=1}^C\left\|\boldsymbol{\mu}_{c}^{(k)}[s+1]-\widetilde{\boldsymbol{\mu}}^{(k)}[s+1]\right\|^2
\end{equation}

\subsection*{Complete-data log-likelihood}
From Equations \eqref{tcnkg_supp}-\eqref{tcng_supp} one can derive the following marginal distributions:
\begin{eqnarray*}
    \mathbf{t}_{c,n}^{(k)}|\mathbf{x}_{c,n}\sim\mathcal{N}\left(W_{c}^{(k)}\mathbf{x}_{c,n}+\boldsymbol{\mu},{\sigma_{c}^{(k)}}^2\mathbb{I}_{d_k}\right)
    \end{eqnarray*}
and
    \begin{eqnarray*}\mathbf{x}_{c,n}|\mathbf{t}_{c,n}\sim\mathcal{N}\left(\Sigma_{c}^{-1}B_{c}(\mathbf{t}_{c,n}-\boldsymbol{\mu}_{c}),\Sigma_{c}^{-1}\right),
    \end{eqnarray*} where:
\begin{itemize}
    \item $\Sigma_{c}:=(\mathbb{I}_q+W_{c}^T\Psi_{c}^{-1}W_{c})=\left(\mathbb{I}_q+\sum_{k=1}^K\frac{1}{\left(\sigma_{c}^{(k)}\right)^2}{W_{c}^{(k)}}^TW_{c}^{(k)}\right)\in\mathbb{R}^{q\times q}$
    \item $B_{c}:={W_{c}}^T\Psi_{c}^{-1}=\left[\frac{{W_{c}^{(1)}}^T}{\left(\sigma_{c}^{(1)}\right)^2}\dots,\frac{{W_{c}^{(K)}}^T}{\left(\sigma_{c}^{(K)}\right)^2}\right]\in\mathbb{R}^{q\times d}$
\end{itemize}
Hence:
\begin{itemize}
    \item $\langle \mathbf{x}_{c,n}\rangle=\Sigma_{c}^{-1}B_{c}(\mathbf{t}_{c,n}-\boldsymbol{\mu}_{c})$
    \item $\langle \mathbf{x}_{c,n}\mathbf{x}_{c,n}^T\rangle=\Sigma_{c}^{-1}+\langle \mathbf{x}_{c,n}\rangle\langle \mathbf{x}_{c,n}\rangle^T$
\end{itemize}

The joint distribution of $\mathbf{t}_{c,n}$ and $\mathbf{x}_{c,n}$ follows ($p(\mathbf{t}_{c,n},\mathbf{x}_{c,n})=p(\mathbf{t}_{c,n}|\mathbf{x}_{c,n})p(\mathbf{x}_{c,n})$), hence the expectation of the complete-data log-likelihood for each center $c$  with respect to $p(\mathbf{x}_{c,n}|\mathbf{t}_{c,n})$:
\begin{eqnarray}
\langle{\mathcal{L}_C}_{c}\rangle & = & -\sum_{n=1}^{N_{c}}\left\{\sum_{k=1}^K\left[\frac{d_k}{2}\ln{\left({\sigma_{c}^{(k)}}^2\right)}+\frac{1}{2{\sigma_{c}^{(k)}}^2}    \|\mathbf{t}_{c,n}^{(k)}-\boldsymbol{\mu}_{c}^{(k)}\|^2+\frac{1}{2{\sigma_{c}^{(k)}}^2}tr\left({W_{c}^{(k)}}^T{W_{c}^{(k)}}\langle \mathbf{x}_{c,n}\mathbf{x}_{c,n}^T\rangle\right)\right.\right. \nonumber \\
    & & \left.\left.-\frac{1}{{\sigma_{c}^{(k)}}^2}\langle \mathbf{x}_{c,n}\rangle^T{W_{c}^{(k)}}^T\left(\mathbf{t}_{c,n}^{(k)}-\boldsymbol{\mu}_{c}^{(k)}\right)\right]+\frac{1}{2}tr\left(\langle \mathbf{x}_{c,n}\mathbf{x}_{c,n}^T\rangle\right)\right\},
\end{eqnarray}

\subsection*{Parameter $W$}

We assume that $\forall c, k$: 
\begin{equation}\label{dist_Wkc}
W_{c}^{(k)}|\widetilde{W}^{(k)},\sigma_{\widetilde{W}^{(k)}}^2\sim\mathcal{MN}_{d_k,q}\left(\widetilde{W}^{(k)},\mathbb{I}_{d_k},\sigma_{\widetilde{W}^{(k)}}^2\mathbb{I}_q\right)
\end{equation}

\textbf{Step 1. \textit{(In each center)}:} Estimate $W_{c}^{(k)}[s+1]$ given $\left(\widetilde{W}^{(k)},\sigma_{\widetilde{W}^{(k)}}^2\right)[s]$.\\

For each center $c$, we consider the following optimization problem:
\begin{equation*}
\max_{W_{c}^{(k)}}\langle{\mathcal{L}_C}_{c}\rangle+\ln{p\left(W_{c}^{(k)}\right)},
\end{equation*}
where $\ln{p\left(W_{c}^{(k)}\right)}=-\frac{1}{2\sigma_{\widetilde{W}^{(k)}}^2}tr\left(\|W_{c}^{(k)}-\widetilde{W}^{(k)}\|_2^2\right)+const.$ \\
It follows:
\begin{eqnarray*}
    W_{c}^{(k)}[s+1] & = & \left[\sum_{n=1}^{N_{c}}(\mathbf{t}_{c,n}^{(k)}-\boldsymbol{\mu}_{c}^{(k)})\langle \mathbf{x}_{c,n}\rangle^T+\frac{{\sigma_{c}^{(k)}}^2}{\sigma_{\widetilde{W}^{(k)}}^2[s]} \widetilde{W}^{(k)}[s]\right]\left[\sum_{n=1}^{N_{c}}\langle \mathbf{x}_{c,n}\mathbf{x}_{c,n}^T\rangle
    +\frac{{\sigma_{c}^{(k)}}^2}{\sigma_{\widetilde{W}^{(k)}}^2[s]}\mathbb{I}_q\right]^{-1}\nonumber\\
\end{eqnarray*}

\textbf{Step 2. \textit{(In the master)}:} Estimate $\left(\widetilde{W}^{(k)},\sigma_{\widetilde{W}^{(k)}}^2\right)[s+1]$ given $W_{c}^{(k)}[s+1]$ for all $c$. Proceeding as for parameter $\boldsymbol{\mu}$ and using \eqref{dist_Wkc}:
\begin{equation}
    \widetilde{W}^{(k)}[s+1]=\frac{1}{C}\sum_{c=1}^CW_{c}^{(k)}[s+1]
\end{equation}
and
\begin{eqnarray}
    \sigma_{\widetilde{W}^{(k)}}^2[s+1] & = &\frac{1}{Cd_kq}\sum_{c=1}^Ctr\left[\left(W_{c}^{(k)}[s+1]-\widetilde{W}^{(k)}[s+1]\right)^T\left(W_{c}^{(k)}[s+1]-\widetilde{W}^{(k)}[s+1]\right)\right]\nonumber\\
    & & 
\end{eqnarray}

\subsection*{Parameter $\sigma^2$}

We assume that $\forall c, k$:
\begin{equation}
    {\sigma_{c}^{(k)}}^2|\widetilde{\sigma}^{(k)^2}\sim \textrm{Inverse-Gamma}(\alpha^{(k)},\beta^{(k)}),
\end{equation}
so that:
\begin{equation}
    Var\left({\sigma_{c}^{(k)}}^2\right)=\frac{{\beta^{(k)}}^2}{(\alpha^{(k)}-1)^2(\alpha^{(k)}-2)}:={\widetilde{\sigma}^{(k)^2}}
\end{equation}

\textbf{Step 1. \textit{(In each center)}:} Estimate ${\sigma_{c}^{(k)}}^2[s+1]$ given $\left(\alpha^{(k)},\beta^{(k)}\right)[s]$. \\

For each center $c$, we consider the following optimization problem:
\begin{equation*}
\max_{{\sigma_{c}^{(k)}}^2}\langle{\mathcal{L}_C}_{c}\rangle+\ln{ p\left({\sigma_{c}^{(k)}}^2\right)},
\end{equation*}
where $\ln{ p\left({\sigma_{c}^{(k)}}^2\right)}=-(\alpha^{(k)}+1)\ln{\left({\sigma_{c}^{(k)}}^2\right)}-\frac{\beta^{(k)}}{{\sigma_{c}^{(k)}}^2}+const$: 

It follows:
\begin{eqnarray}
    \left(\sigma_{c}^{(k)}\right)^2[s+1] & = &\frac{1}{N_{c}d_k+2(\alpha^{(k)}[s]+1)}\left\{\sum_{n=1}^{N_{c}}\left[\|t_{c,n}^{(k)}-\mu_{c}^{(k)}\|^2+ tr\left({W_{c}^{(k)}}^T{W_{c}^{(k)}}\langle x_{c,n}x_{c,n}^T\rangle\right)\right.\right.\nonumber\\
    & & \left.\left.- 2\langle x_{c,n}\rangle^T{W_{c}^{(k)}}^T\left(t_{c,n}^{(k)}-\mu_{c}^{(k)}\right) \right]+2\beta^{(k)}[s]\right\}\nonumber\\
     & = &\frac{1}{N_{c}d_k+2(\alpha^{(k)}[s]+1)}\left\{\sum_{n=1}^{N_{c}}\left[\|(t_{c,n}^{(k)}-\mu_{c}^{(k)})-W_{c}^{(k)}\langle x_{c,n}\rangle\|^2\right.\right.\nonumber\\
    & & \left.\left.+ tr\left(W_{c}^{(k)}\Sigma_c^{-1}{W_{c}^{(k)}}^T\right) \right]+2\beta^{(k)}[s]\right\}
\end{eqnarray}

\textbf{Step 2. \textit{(In the master)}:} Estimate $\left(\alpha^{(k)}, \beta^{(k)}\right)[s+1]$ given ${\sigma_{c}^{(k)}}^2[s+1]$ for all $c$. \\

In order to estimate the parameters of the inverse-gamma distribution, we use the (ML1) method described by Llera and Beckmann~\cite{llera2016estimating}.

\section*{Appendix B. Supplementary Tables and Figures}\label{AppendixB}

\begin{table}[ht]
\centering
\caption{Demographics of the clinical sample from the Alzheimer's Disease Neuroimaging Initiative
 (ADNI).}\label{Tab_data1}
\label{tab:demographics}
\centering
\begin{tabular}{llccc}
\textbf{Group} & \textbf{Sex} & \textbf{Count}  & \textbf{Age}   & \textbf{Range}\\ 
\hline \\
\multirow{ 2}{*}{AD}   & Female  & 94     & 71.58 (7.59)    & 55.10 - 90.30   \\
   & Male         & 113    & 74.37 (7.19)     & 55.90 - 89.30    \\ \\
\multirow{ 2}{*}{NL}    & Female    & 58      & 73.76 (4.61)   & 65.10 - 84.70      \\
  & Male  & 46          & 75.39 (6.58)     & 59.90 - 85.60    
\end{tabular}
\end{table}

\begin{table}[ht]
\caption{Data Types.}\label{Tab_data2}
\label{tab:views}
\centering
\begin{tabular}{lcp{5cm}}
\textbf{View} & \textbf{Dim.} & \textbf{Description}  \\ 
\hline \\
CLINIC & 7 & Cognitive assessments \\
MRI & 41 & Magnetic resonance imaging \\
FDG & 41 & Fluorodeoxyglucose-Positron Emission Tomography (PET) \\
AV45 & 41 & AV45-Amyloid PET 
\end{tabular}
\end{table}

\begin{table}[ht]
\caption{Latent Space Dimension Assessment.} \label{Latent_space}
\centering
\begin{tabular}{c|c|c|c|c|c|c}
 & \multicolumn{3}{|c|}{\textbf{SD}} & \multicolumn{3}{|c}{\textbf{ADNI}}\\
\hline & & & & & \\
$\mathbf{q}$  & \textbf{WAIC} & \textbf{MAE Train} & \textbf{MAE Test} & \textbf{WAIC} & \textbf{MAE Train} & \textbf{MAE Test} \\
\hline & & & & & \\
2 & -2911 & 0.0886$\pm$0.0031 & 0.0879$\pm$0.0024 & -4916 & 0.1240$\pm$0.0017 & 0.1249$\pm$0.0011 \\
3 & -3954 & 0.0640$\pm$0.0029 & 0.0662$\pm$0.0038 & -5275 & 0.1170$\pm$0.0028 & 0.1187$\pm$0.0023 \\
4 & -4725 & 0.0450$\pm$0.0036 & 0.0485$\pm$0.0042 & -6088 & 0.1113$\pm$0.0016 & 0.1142$\pm$0.0009 \\
5 & $\mathbf{-5114}$ & 0.0327$\pm$0.0038 & 0.0375$\pm$0.0054 & -6915 & 0.1064$\pm$0.0017 & 0.1102$\pm$0.0007 \\
6 & -3688 & 0.0313$\pm$0.0032 & 0.0366$\pm$0.0052  & $\mathbf{-7546}$ & 0.1028$\pm$0.0015 & 0.1073$\pm$0.0005\\
7 & 5722 & 0.0320$\pm$0.0036 & 0.0373$\pm$0.0053 & - & - & - \\
\end{tabular}
\end{table}

\begin{table}[ht]
\caption{Results on SD dataset for all scenarios, and comparison with VAE and mc-VAE.} \label{SD_2}
\centering
\begin{tabular}{l|c|l|l|l|l}
\textbf{Scenario} & \textbf{Centers} & \textbf{Method} & \textbf{MAE Train} & \textbf{MAE Test} & \textbf{Accuracy in LS}\\
\hline 
\multirow{ 12}{*}{IID} & \multirow{ 3}{*}{\makecell{1 \\ (centralized \\ case)}} & \textbf{Fed-mv-PPCA} & \textbf{0.0124$\boldsymbol\pm$3.e$^{\mathbf{-5}}$} & \textbf{0.0405$\boldsymbol\pm$0.0037} & 1$\pm$0\\
& & VAE & 0.0851$\pm$0.0039 & 0.1011$\pm$0.0048 & 1$\pm$0\\
& & mc-VAE & 0.1236$\pm$0.0099 & 0.1382$\pm$0.0087 & 1$\pm$0\\\cline{2-6}
& \multirow{ 4}{*}{3} & \textbf{Fed-mv-PPCA} & \textbf{0.0320$\boldsymbol\pm$0.0024} & \textbf{0.0373$\boldsymbol\pm$0.0035} & 1$\pm$0\\
& & DP-Fed-mv-PPCA & 0.0858$\pm$0.0111 & 0.0848$\pm$0.0099 & 1$\pm$0\\
& & VAE & 0.0683$\pm$0.0073 & 0.0702$\pm$0.0073 & 1$\pm$0\\
& & mc-VAE & 0.1172$\pm$0.0030 & 0.1146$\pm$0.0046 & 1$\pm$0\\\cline{2-6}
& \multirow{ 4}{*}{6} & \textbf{Fed-mv-PPCA} & \textbf{0.0422$\boldsymbol\pm$0.0052} & \textbf{0.0371$\boldsymbol\pm$0.0039} & 1$\pm$0\\
& & DP-Fed-mv-PPCA & 0.0843$\pm$0.0093 & 0.0738$\pm$0.0076 & 1$\pm$0\\
& & VAE & 0.0769$\pm$0.0093 & 0.0680$\pm$0.0080 & 1$\pm$0\\
& & mc-VAE & 0.1295$\pm$0.0055 & 0.1134$\pm$0.0030 & 1$\pm$0\\
\hline 
\multirow{ 8}{*}{G} & \multirow{ 4}{*}{3} & \textbf{Fed-mv-PPCA} & \textbf{0.0432$\boldsymbol\pm$0.0074} & \textbf{0.0433$\boldsymbol\pm$0.0026} & \textbf{0.9930$\boldsymbol\pm$0.0093}\\
& & DP-Fed-mv-PPCA & 0.0960$\pm$0.0151 & 0.0951$\pm$0.0144 & 0.9873$\pm$0.0176\\
& & VAE & 0.0787$\pm$0.0135 & 0.0698$\pm$0.0082 & 0.9835$\pm$0.0272\\
& & mc-VAE & 0.1562$\pm$0.0086 & 0.1497$\pm$0.0076 & 0.9732$\pm$0.0512\\\cline{2-6}
& \multirow{ 4}{*}{6} & \textbf{Fed-mv-PPCA} & \textbf{0.0538$\boldsymbol\pm$0.0101} & \textbf{0.0420$\boldsymbol\pm$0.0048} & \textbf{0.9995$\boldsymbol\pm$0.0019}\\
& & DP-Fed-mv-PPCA & 0.0945$\pm$0.0129 & 0.0813$\pm$0.0114 & 1$\pm$0\\
& & VAE & 0.0891$\pm$0.0148 & 0.0685$\pm$0.0063 & 0.9918$\pm$0.0428\\
& & mc-VAE & 0.1758$\pm$0.0154 & 0.1495$\pm$0.0112 & 0.9607$\pm$0.0398\\
\hline\hline
\multirow{ 4}{*}{K} & \multirow{ 2}{*}{3} & Fed-mv-PPCA & 0.0320$\pm$0.0052 & 0.0455$\pm$0.0069 & 1$\pm$0 \\
& & DP-Fed-mv-PPCA & 0.0922$\pm$0.0137 & 0.1048$\pm$0.0151 & 1$\pm$0\\\cline{2-6}
& \multirow{ 2}{*}{6} & Fed-mv-PPCA & 0.0402$\pm$0.0065 & 0.0448$\pm$0.0088 & 1$\pm$0 \\
& & DP-Fed-mv-PPCA & 0.0959$\pm$0.0105 & 0.1014$\pm$0.0119 & 1$\pm$0\\
\hline 
\multirow{ 4}{*}{G/K} & \multirow{ 2}{*}{3} & Fed-mv-PPCA & 0.0395$\pm$0.0068 & 0.0567$\pm$0.0108 & 0.7812$\pm$0.02179 \\
& & DP-Fed-mv-PPCA & 0.1144$\pm$0.0215 & 0.1343$\pm$0.0235 & 0.7852$\pm$0.0526\\\cline{2-6}
& \multirow{ 2}{*}{6} & Fed-mv-PPCA & 0.0499$\pm$0.0104 & 0.0575$\pm$0.0128 & 0.7785$\pm$0.0222 \\
& & DP-Fed-mv-PPCA & 0.1070$\pm$0.0139 & 0.1119$\pm$0.0144 & 0.7887$\pm$0.0449
\end{tabular}
\end{table}

\clearpage

\begin{figure}
     \begin{subfigure}[b]{0.4\textwidth}
         \centerline{\includegraphics[width=\textwidth]{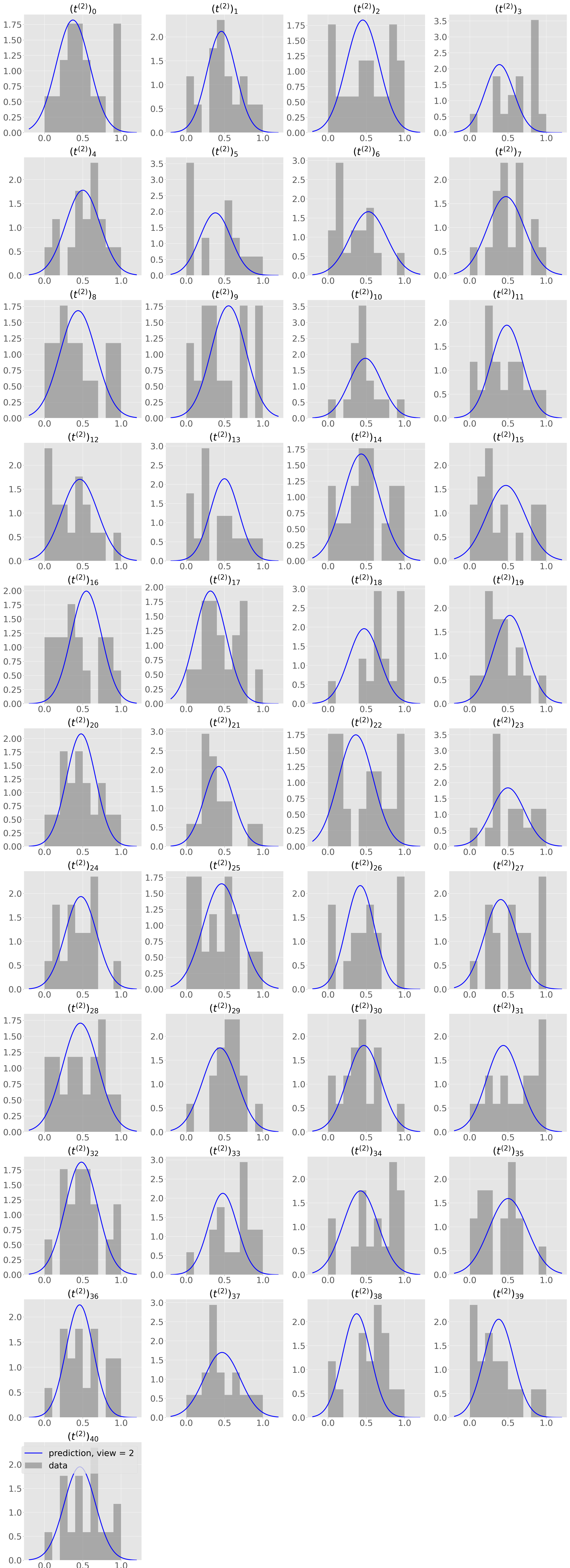}}
         \caption{MRI}
     \end{subfigure}
     \hfill
     \begin{subfigure}[b]{0.4\textwidth}
         \centerline{\includegraphics[width=\textwidth]{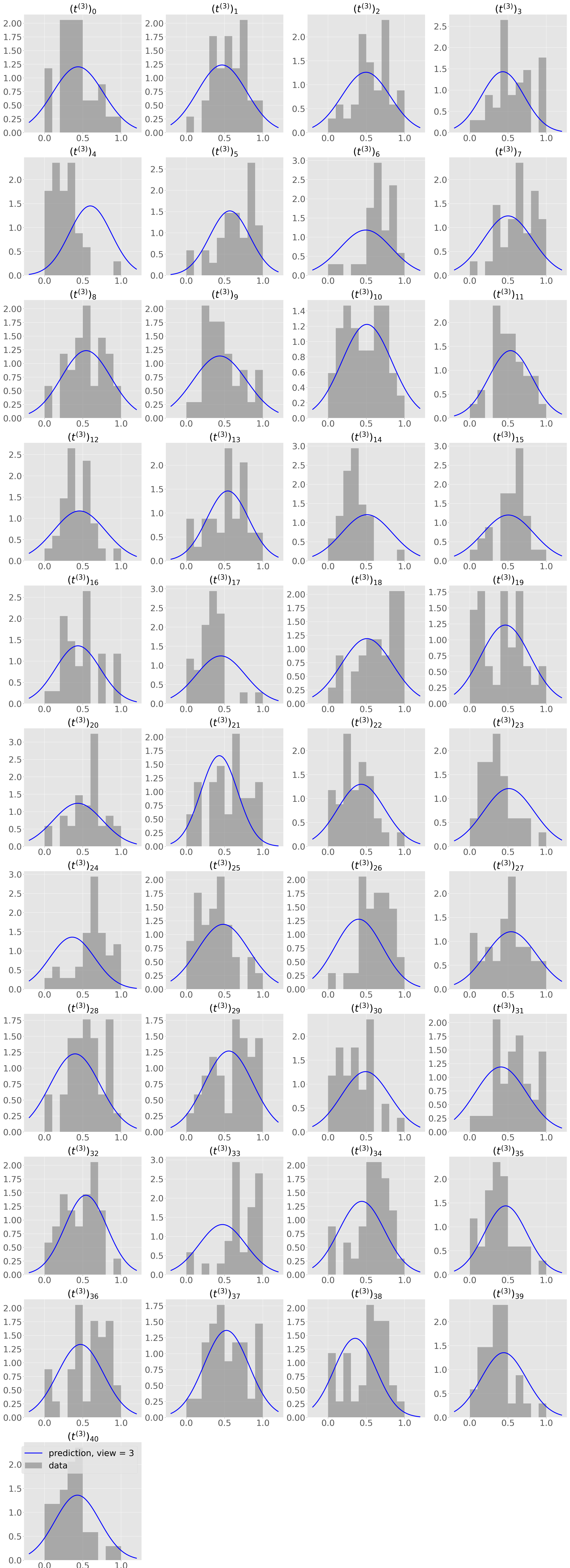}}
         \caption{FDG}
     \end{subfigure}
\caption{Global distribution of all features of missing views in the Test dataset, for the G/K scenario. In this scenario, 1/3 of all subjects in the Test dataset do not provide MRI data and 1/3 do not provide FDG data. In both figure, the blue curve denotes the predicted global distribution  of all features of the (a) MRI view and (b) the FDG view. Gray histograms correspond to real data in the Test dataset.}\label{Prediction_ADNI_supp}
\end{figure}

\begin{table}[ht]
\caption{Results on ADNI dataset for all scenario G using VAE (resp. mc-VAE), and FedProx as aggregation scheme with the proximal term $\lambda$ varying from 0.01 to 0.5.} \label{ADNI_fedprox}
\centering
\begin{tabular}{l|c|l|l|l|l}
\textbf{Centers} & \textbf{Method} & $\mathbf{\lambda}$ & \textbf{MAE Train} & \textbf{MAE Test} & \textbf{Accuracy in LS}\\
\hline 
\multirow{ 16}{*}{3} & \multirow{8}{*}{VAE} & 0 (FedAvg) & 0.1172$\pm$0.0022 & 0.1192$\pm$0.0015 & 0.8289$\pm$0.0383\\
& & 0.01 & 0.1209$\pm$0.0074 & 0.1215$\pm$0.0013 & 0.7962$\pm$0.0438\\
& & 0.05 & 0.1215$\pm$0.0076 & 0.1218$\pm$0.0015 & 0.8009$\pm$0.0425\\
& & 0.1 & 0.1214$\pm$0.0075 & 0.1220$\pm$0.0018 & 0.8067$\pm$0.0399\\
& & 0.2 & 0.1218$\pm$0.0077 & 0.1221$\pm$0.0016 & 0.7977$\pm$0.0469\\
& & 0.3 & 0.1212$\pm$0.0075 & 0.1216$\pm$0.0017 & 0.7865$\pm$0.0443\\
& & 0.4 & 0.1212$\pm$0.0074 & 0.1217$\pm$0.0014 & 0.8033$\pm$0.0355\\
& & 0.5 & 0.1214$\pm$0.0077 & 0.1218$\pm$0.0020 & 0.7878$\pm$0.0420\\\cline{2-6}
& \multirow{8}{*}{mc-VAE} & 0 (FedAvg) & 0.1602$\pm$0.0035 & 0.1567$\pm$0.0017 & 0.8850$\pm$0.0262\\
& & 0.01 &  0.1674$\pm$0.0155 & 0.1605$\pm$0.0028 & 0.8185$\pm$0.0494\\
& & 0.05 &  0.1667$\pm$0.0153 & 0.1604$\pm$0.0028 & 0.8156$\pm$0.0444\\
& & 0.1 & 0.1674$\pm$0.0154 & 0.1609$\pm$0.0022 & 0.8249$\pm$0.0399\\
& & 0.2 & 0.1676$\pm$0.0156 & 0.1610$\pm$0.0025 & 0.8217$\pm$0.0431\\
& & 0.3 & 0.1676$\pm$0.0157 & 0.1610$\pm$0.0029 & 0.8184$\pm$0.0511\\
& & 0.4 & 0.1673$\pm$0.0155 & 0.1607$\pm$0.0021 & 0.8275$\pm$0.0426\\
& & 0.5 & 0.1679$\pm$0.0157 & 0.1613$\pm$0.0025 & 0.8229$\pm$0.0408\\
\hline
\multirow{ 16}{*}{6} & \multirow{8}{*}{VAE} & 0 (FedAvg) & 0.1357$\pm$0.0042 & 0.1191$\pm$0.0014 & 0.8224$\pm$0.0377\\
& & 0.01 & 0.1400$\pm$0.0114 & 0.1198$\pm$0.0022 & 0.7804$\pm$0.0470\\
& & 0.05 & 0.1403$\pm$0.0115 & 0.1203$\pm$0.0021 & 0.7827$\pm$0.0411\\
& & 0.1 & 0.1406$\pm$0.0116 & 0.1205$\pm$0.0019 & 0.7847$\pm$0.0531\\
& & 0.2 & 0.1407$\pm$0.0117 & 0.1207$\pm$0.0018 & 0.7837$\pm$0.0433\\
& & 0.3 & 0.1404$\pm$0.0115 & 0.1207$\pm$0.0018 & 0.7837$\pm$0.0569\\
& & 0.4 & 0.1405$\pm$0.0116 & 0.1203$\pm$0.0020 & 0.7753$\pm$0.0546\\
& & 0.5 & 0.1406$\pm$0.0113 & 0.1205$\pm$0.0023 & 0.7776$\pm$0.0501\\\cline{2-6}
& \multirow{8}{*}{mc-VAE} & 0 (FedAvg) & 0.1840$\pm$0.0054 & 0.1563$\pm$0.0017 & 0.8894$\pm$0.0230\\
& & 0.01 & 0.1932$\pm$0.0220 & 0.1596$\pm$0.0019 & 0.8140$\pm$0.0420\\
& & 0.05 & 0.1927$\pm$0.0219 & 0.1592$\pm$0.0016 & 0.8101$\pm$0.0484\\
& & 0.1 & 0.1932$\pm$0.0221 & 0.1595$\pm$0.0022 & 0.8043$\pm$0.0399\\
& & 0.2 & 0.1930$\pm$0.0219 & 0.1596$\pm$0.0020 & 0.8066$\pm$0.0441\\
& & 0.3 & 0.1931$\pm$0.0221 & 0.1595$\pm$0.0019 & 0.8217$\pm$0.0453\\
& & 0.4 & 0.1931$\pm$0.0220 & 0.1594$\pm$0.0018 & 0.8111$\pm$0.0419\\
& & 0.5 & 0.1934$\pm$0.0221 & 0.1596$\pm$0.0022 & 0.8021$\pm$0.0581\\
\end{tabular}
\end{table}

\begin{figure}
     \begin{subfigure}[b]{0.5\textwidth}
         \centerline{\includegraphics[width=\textwidth]{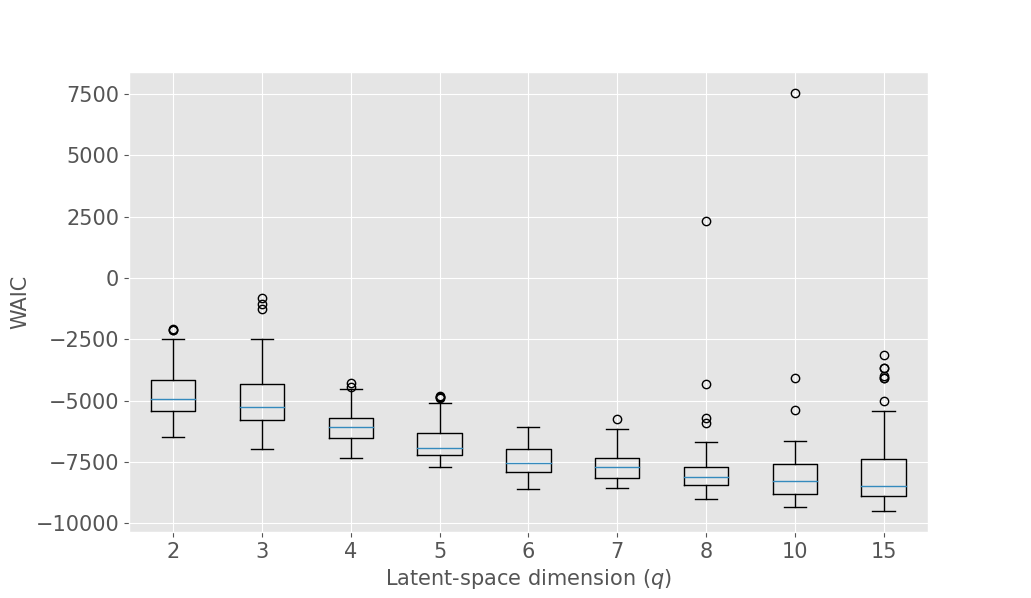}}
         \caption{$2\leq q\leq 15$, boxplot}
     \end{subfigure}
     \hfill
     \begin{subfigure}[b]{0.38\textwidth}
         \centerline{\includegraphics[width=\textwidth]{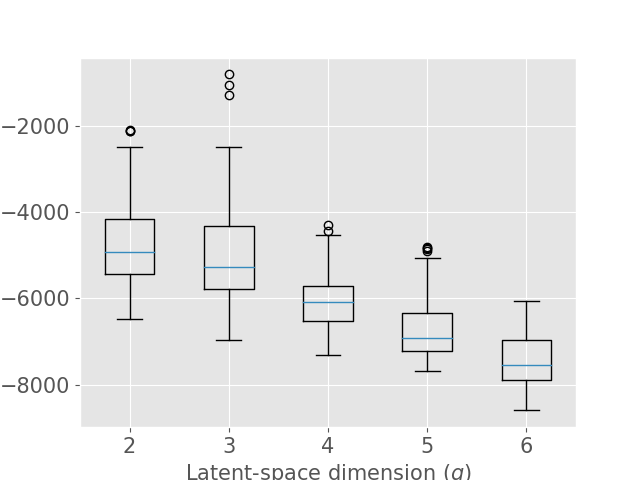}}
         \caption{$2\leq q\leq 6$, boxplot}
     \end{subfigure}
\caption{Boxplots showing the evolution of the WAIC with the latent space dimension varying (a) from 2 to 15 and (b) a zoom with latent dimension $q$ up to 6. We notice that an increasing complexity of the model provide only a relative improvement of the median WAIC score, while being associated to a higher variation suggesting less stable results.}\label{fig_waic_q_15}
\end{figure}

\begin{figure}[ht]
\centering
\includegraphics[width=\textwidth]{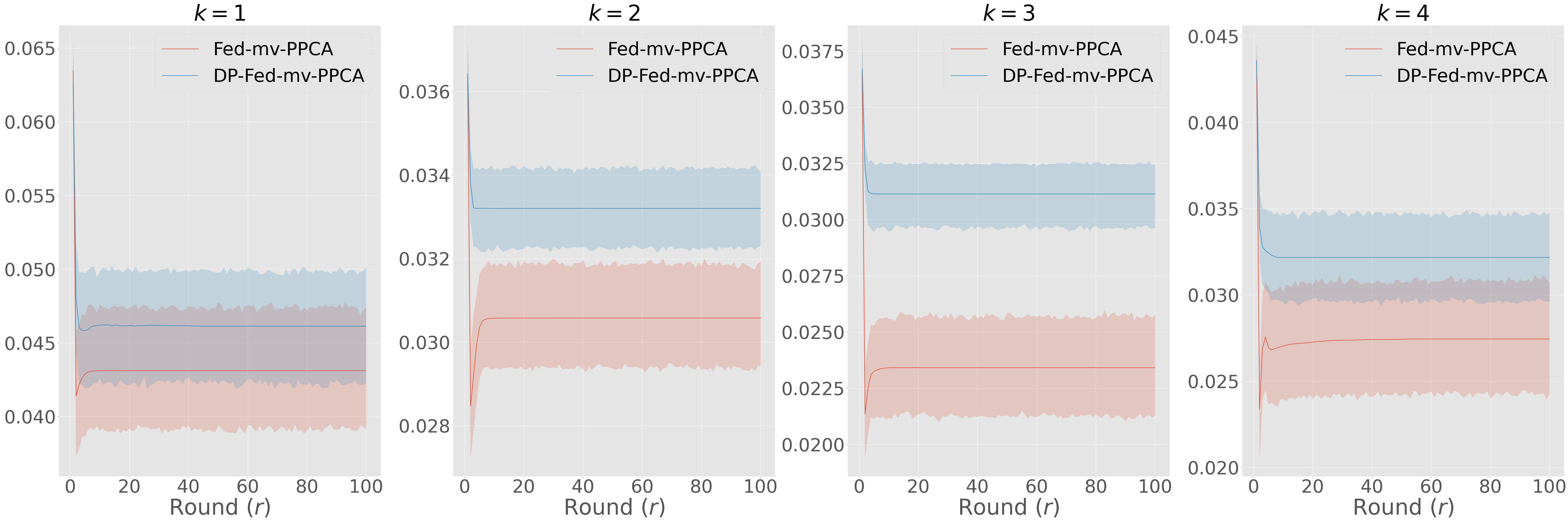}
\caption{Mean and standard deviation of $\widetilde{\sigma}^{(k)^2}$ at each round over 10 3-CV tests performed in the IID case, using (red) Fed-mv-PPCA and (blue) DP-Fed-mv-PPCA. By model definition, $\widetilde{\sigma}^{(k)^2}$ represents the global variance of the Gaussian noise for the $k^{\textrm{th}}$-view. As one can see, when DP is introduced the estimated global data variance is greater for every view. This fact can affect the performance of the final global model, both for the reconstruction and the separation tasks.}\label{Convergence_global_supp}
\end{figure}


\begin{figure}[ht]
\centering
     \begin{subfigure}[b]{\textwidth}
         \centerline{\includegraphics[width=.5\textwidth]{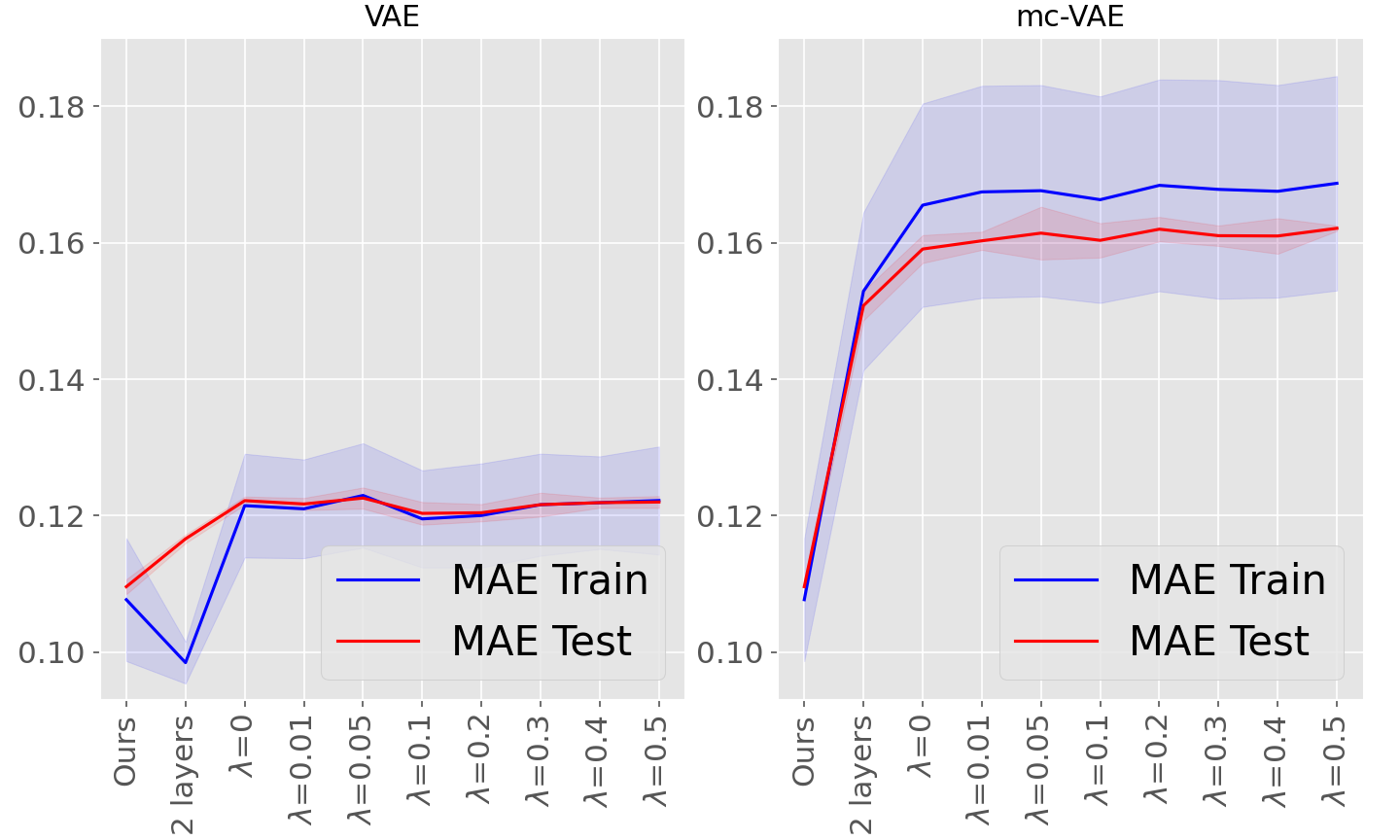}}
         \caption{Mean absolute error for VAE (left) and mc-VAE (right)}
     \end{subfigure}
     \hfill
     \begin{subfigure}[b]{\textwidth}
         \centerline{\includegraphics[width=.5\textwidth]{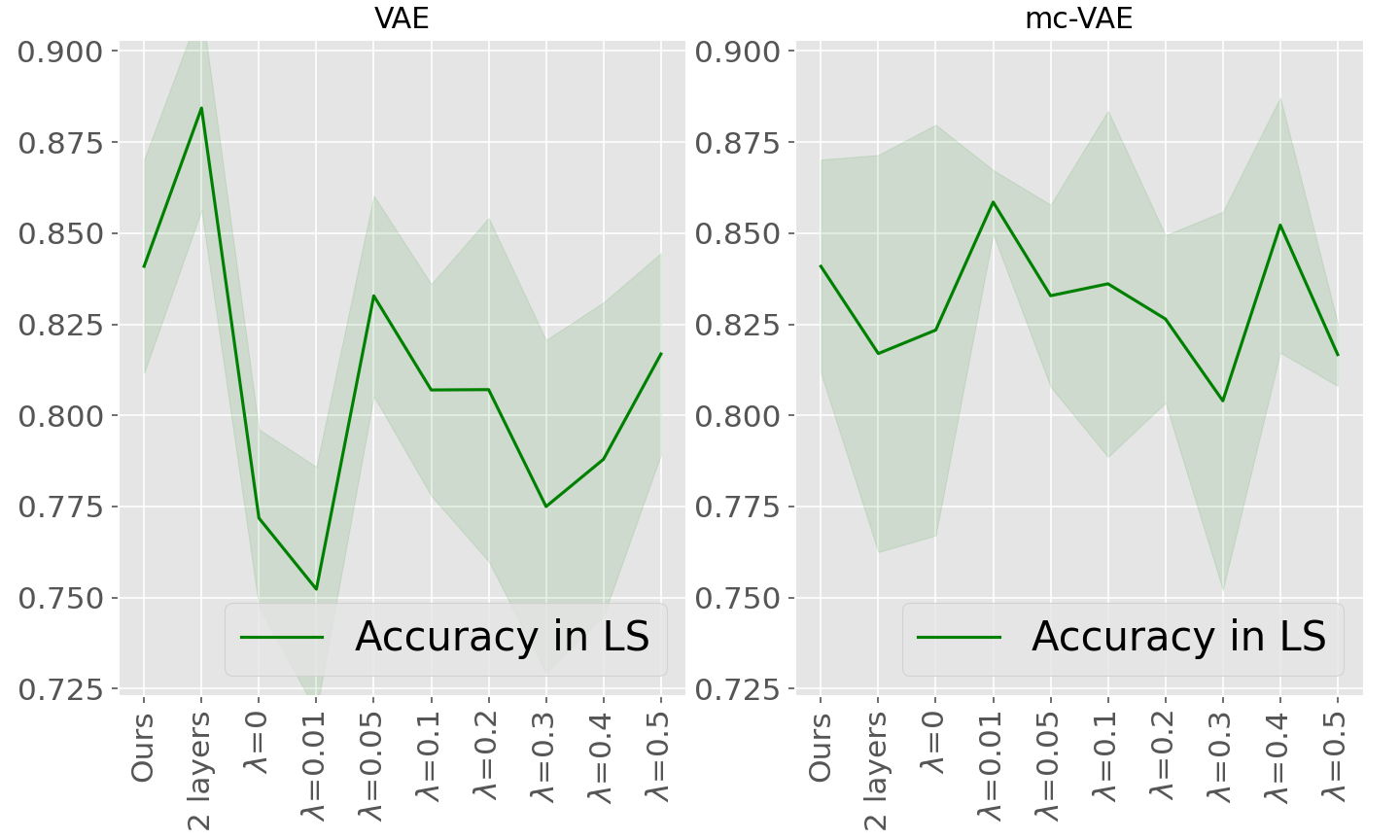}}
         \caption{Classification accuracy in the latent space for VAE (left) and mc-VAE (right)}
     \end{subfigure}
\caption{ADNI data, scenario G with 3 centers, comparing the performance of VAE (left column) and mc-VAE (right column) either increasing the number of considered layers (2 layers), or adopting a robust aggregation scheme alternative to Fed-Avg (FedProx, with varying parameter $\lambda$). For the sake of comparison, in each plot, the first element corresponds to results obtained with our method - Fed-mv-PPCA - for the considered scenario. Upper row: MAE results for the (blue) train and (red) test datasets. Bottom row: accuracy in the latent space for the test dataset. For each metric we provide results obtained using a 2-layers VAE, resp. a 2-layers mc-VAE  (second element of each plot), and federated averaging as aggregation scheme. Finally, results for both MAE and accuracy in the latent space using FedProx as aggregation scheme are provided, with the FedProx proximal parameter $\lambda$ varying between 0 to 0.5. Note that if we set $\lambda=0$ we recover the Fed-Avg aggregation scheme.}\label{fig_adni_fedprox}
\end{figure}

\end{document}